\title{On the study of frequency control and spectral bias in Wavelet-Based Kolmogorov Arnold networks: A path to physics-informed KANs}
\author{J. Daniel Meshir \thanks{Electronic address: \texttt{juan.meshir2900@alumnos.udg.mx}; Corresponding author}}
\author{Abel Palafox \thanks{Electronic address: \texttt{abel.palafox@academicos.udg.mx}}}
\author{E. Alejandro Guerrero \thanks{Electronic address: \texttt{edgar.guerrero@academicos.udg.mx}}}
\affil{Department of Mathematics, University Center for Exact and Engineering Sciences\\University of Guadalajara}
\date{\today}
\begin{document}

\maketitle
\justifying
\begin{abstract}

Spectral bias, the tendency of neural networks to prioritize learning low-frequency components of functions during the initial training stages, poses a significant challenge when approximating solutions with high-frequency details. This issue is particularly pronounced in physics-informed neural networks (PINNs), widely used to solve differential equations that describe physical phenomena. In the literature, contributions such as Wavelet Kolmogorov Arnold Networks (Wav-KANs) have demonstrated promising results in capturing both low- and high-frequency components. Similarly, Fourier features (FF) are often employed to address this challenge. However, the theoretical foundations of Wav-KANs, particularly the relationship between the frequency of the mother wavelet and spectral bias, remain underexplored. A more in-depth understanding of how Wav-KANs manage high-frequency terms could offer valuable insights for addressing oscillatory phenomena encountered in parabolic, elliptic, and hyperbolic differential equations. In this work, we analyze the eigenvalues of the neural tangent kernel (NTK) of Wav-KANs to enhance their ability to converge on high-frequency components, effectively mitigating spectral bias. Our theoretical findings are validated through numerical experiments, where we also discuss the limitations of traditional approaches, such as standard PINNs and Fourier features, in addressing multi-frequency problems.

\end{abstract}

\section{Introduction}

In many fields such as industry, physics, mathematics, economics, and biology, understanding the relationship between features and outcomes of physical phenomena is of great interest \cite{differentialequationaplications1,Penrosetheroadtoreality,strogatz:2000}. In recent decades, Machine learning methods have gained prominence for detecting those relationships in complex systems \cite{KISSAS2020112623aplicationsDNN, PhysRevE025205applicationsDNN}. A notable machine learning that focuses on approximate solutions to differential equations that describe physical processes is \textit{Physics-Informed Neural Networks} (PINNs) \cite{RAISSI2019686,SIRIGNANO20181339}, embedding the physical laws governing the phenomena in the neural network structure to better approximate solutions. 

Function approximation capability of neural networks (NNs) is supported by the universal approximation theorem \cite{Cybenko1989ApproximationBS, HORNIK1990551approach}. This theorem asserts that a feedforward NN with sufficient neurons can approximate any continuous function within a compact domain, such as the hypercube $[0,1]^n$. While this provides a theoretical foundation for NNs as universal approximators, practical challenges arise, especially when approximating functions with high-frequency components \cite{FourierFeaturesTancik2020}. One major challenge is known as \textit{spectral bias} \cite{cao2020understandingspectralbiasdeep}, where NNs predominantly learn low-frequency components of a function in the initial training phases. In contrast, high-frequency components are learned much more slowly; in some cases, the training process may reach the maximum number of iterations before fully capturing these high-frequency details.

In some cases, spectral bias can be beneficial. For example, it helps NNs avoid over-fitting to noise in the data by focusing on the general patterns first \cite{badger2022deeplearninggeneralizes}. However, when dealing with more complex data, such as high-resolution images, signal processing tasks, or the need to approximate both low- and high-frequency components of a differential equation solution, this bias becomes problematic \cite{FourierFeaturesTancik2020, cao2020understandingspectralbiasdeep, Basrifrecuencybias2020, rahaman2019spectralbiasneuralnetworks}, NN requires many iterations to capture high-frequency details, leading to inefficient and computationally expensive training processes that are often impractical \cite{basri2019convergencerateneuralnetworks}. This issue is particularly relevant in PINNs, usually inherited from the traditional NNs upon which they are built \cite{krishnapriyan2021characterizingpossiblefailuremodes, ProblemsPINNs2, WANG2022110768whenandwhyfailspinns}. This bias hinders their ability to accurately represent high-frequency details, posing a significant challenge to their effectiveness. To mitigate spectral bias, researchers have proposed using \textit{Fourier Features} \cite{FourierFeaturesTancik2020, WANG2021113938fourierfeaturespINNs}. Fourier Features introduce a preprocessing layer that transforms the input data by encoding it with sinusoidal functions of various frequencies. This transformation embeds high-frequency components into the input representation, making it easier for the NN to learn both high- and low-frequency patterns. By adding these Fourier Feature layers, PINNs can better approximate complex solutions and overcome some limitations imposed by spectral bias. The success of Fourier Features highlights their potential to improve PINNs and other neural network architectures in tasks requiring detail. Despite their advantages, Fourier Features also come with specific challenges and limitations. First, the choice of frequency scales in the sinusoidal embedding is critical; poorly chosen scales can lead to over-parameterization, increased computational overhead, or difficulty in optimization. Furthermore, adding these features introduces extra hyperparameters, such as the frequency range and the number of frequencies, which require careful tuning to balance performance and computational efficiency.

Recently, Kolmogorov-Arnold Networks (KANs) \cite{liu2024kankolmogorovarnoldnetworks} have emerged as an innovative neural network architecture, grounded in the Kolmogorov-Arnold representation theorem \cite{KolmogorovArnold2009Theorem, KANrepresentationtheorem2}. This theorem asserts that any multivariate continuous function can be decomposed into a sum of univariate functions. Unlike traditional neural networks, which employ a fixed nonlinear activation function, KANs assign a learnable univariate function to each connection or "edge." These univariate functions can take various forms, like B-spline curves, leading to a variant known as Spl-KAN \cite{liu2024kankolmogorovarnoldnetworks}. Other potential choices include Chebyshev polynomials \cite{ss2024chebyshevpolynomialbasedkolmogorovarnoldnetworks} or, as explored in our study, wavelet functions, which give rise to Wavelet Kolmogorov-Arnold Networks (Wav-KANs) \cite{bozorgasl2024wavkanwaveletkolmogorovarnoldnetworks}. Experiments on the MNIST dataset demonstrate their efficacy, showcasing their potential for many applications involving intricate and hierarchical data structures.

In this paper, we focus on studying the capacity of Wav-KANs to approximate functions containing both low- and high-frequency components. We analyze this through the lens of the Neural tangent kernel (NTK) \cite{NTKjacot2018}, showing that the eigenvalues of the NTK can be controlled by adjusting the frequency of the mother wavelet used in the network. This frequency control enables us to modulate the convergence speed for high-frequency components during training, which has significant implications for mitigating spectral bias. To the best of our knowledge, this is the first theoretical and practical study of Wav-KANs through the NTK framework. Our findings show that carefully selecting the wavelet functions can optimize the learning process for functions with diverse frequency content, thus addressing a key challenge in NN training. This study contributes to advancing the theoretical understanding of wavelet-based neural networks. It provides a pathway for more efficient training techniques, leading to faster, more accurate solutions for problems in scientific computing, engineering, and beyond.

Moreover, Wav-KANs hold significant promise for replacing traditional NNs in PINNs, where the spectral bias inherent to standard architectures has been a significant limitation. The ability of Wav-KANs to effectively balance the learning of low- and high-frequency components suggests a path forward for overcoming these issues, offering enhanced performance and robustness in solving complex partial differential equations and other scientific problems.

The rest of the paper is organized as follows: Section 2 reviews the literature on spectral bias in neural networks and prior work on KANs, Wav-KANs, and PINNs. Section 3 presents the theoretical background of Wav-KANs, explains the spectral bias problem, and demonstrates how controlling the mother wavelet's frequency influences the NTK's eigenvalues, comparing this with other methods like Fourier features. Section 4 discusses the adaptation of Wav-KANs within the PINN framework and compares the results with different models. Section 5 explores broader implications and potential applications, and Section 6 provides conclusions.

\section{Related Work}

 One of the first comprehensive studies on spectral bias was conducted by Nasim Rahaman et al. in \cite{rahaman2019spectralbiasneuralnetworks}, where they provided empirical evidence that NNs exhibit a preference for learning low-frequency patterns first. This work laid the foundation for further investigations into the underlying mechanisms of spectral bias. In another important contribution, Ronen Basri et al. in \cite{basri2020frequencybiasneuralnetworks} explored spectral bias through the lens of the NTK, offering a theoretical explanation for this behavior. 

The term NTK was initially introduced by Jacot et al. in \cite{NTKjacot2018}, where they demonstrated that the evolution of a NN during training can be described by a kernel, now known as the NTK. This provides a robust framework for analyzing the training dynamics of neural networks in the infinite-width regime, where the training process can be approximated by a linear model. One key advantage of using the NTK to study spectral bias is that the eigenvalues are closely linked to the convergence speed of different frequency components during training. Larger eigenvalues correspond to faster learning of certain features, typically lower-frequency components, while smaller eigenvalues lead to slower convergence, which affects the learning of high-frequency components. This relationship makes it a compelling tool for understanding and potentially mitigating spectral bias in NN.

On the other hand, as mentioned earlier, with the recent development of KANs, Wav-KANs have demonstrated strong performance in capturing both high- and low-frequency components in image data, as shown in \cite{bozorgasl2024wavkanwaveletkolmogorovarnoldnetworks}. This work aims to extend this capability to function approximation for mitigating spectral bias. Additionally, Yizheng Wang et al. in \cite{wang2024kolmogorovarnoldinformedneural} explored the use of KANs within PINNs, creating Kolmogorov Arnold Informed Neural Networks (KINNs) by replacing NNs with KANs. Their approach, based on Spl-KANs \cite{liu2024kankolmogorovarnoldnetworks}, demonstrated promising results and reduced spectral bias in practical experiments. Although Wav-KANs have also shown effective performance within PINNs to approximate solutions to differential equations \cite{patra2024physicsinformedkolmogorovarnoldneural}, no studies, theoretical or practical, have yet examined spectral bias when using Wav-KANs for function approximation or Wav-KINNs for solving differential equations.

An alternative approach to mitigate spectral bias in NNs involves Fourier feature mapping. Matthew Tancik et al. \cite{FourierFeaturesTancik2020} proposed transforming input data using Fourier features to enable NNs to capture high-frequency image components. Building on this, Sifan Wang et al. \cite{WANG2021113938fourierfeaturespINNs} applied Fourier features (FF) to PINNs, showing theoretically that this technique allows NNs to capture high frequencies. They further proposed multiple Fourier features (MFF) to improve the performance of PINNs on specific differential equations. Recognizing that different functions may have distinct spatial and temporal frequency domains, they developed spatial-temporal multiple Fourier features (STMFF) by adding them separately for spatial and temporal domains. This approach effectively enabled PINNs to capture high and low frequencies in practical applications. However, while FF do not add more trainable parameters, they increase the model's hyper-parameters. In STMFF, hyper-parameters must be carefully chosen to approximate the target function accurately.

In this work, we examine the training dynamics of Wav-KANs through the lens of the NTK, an approach that, to our knowledge, has not been previously explored. Building on insights from \cite{WANG2021113938fourierfeaturespINNs}, we theoretically demonstrate that the frequencies learned by the network can be precisely controlled by the frequency of the mother wavelet, allowing us to address spectral bias. Additionally, we provide empirical evidence that increasing the number of hidden units can achieve a similar effect to adjusting the mother wavelet’s frequency. This simplifies the model by eliminating the need for additional hyper-parameters to capture both low- and high-frequency components, unlike approaches such as Fourier features.

Our findings contribute theoretically and practically by presenting a framework that seamlessly integrates into the PINN structure. We empirically show that Wav-KINNs effectively mitigate spectral bias, positioning them as a promising alternative to traditional PINNs for complex, multi-frequency function approximations. This is the first study to combine NTK analysis with wavelet-driven frequency control in Wav-KANs, bridging theoretical insights and practical implementation for spectral bias mitigation in scientific machine-learning applications.

\section{Wavelet Kolmogorov Arnold Networks and Spectral Bias}

The architecture of KANs is based on the Kolmogorov-Arnold Representation Theorem \cite{KolmogorovArnold2009Theorem}, which states that any continuous multivariate function can be expressed as a combination of continuous univariate functions. The theorem can be formally stated as follows:

\begin{theorem}[Kolmogorov-Arnold representation]
    For any continuous function $f: [0,1]^n\to \mathbb{R}$, there exist $2n+1$ functions $\Psi_q: \mathbb{R}\to \mathbb{R}$ and $2n+1\times n$ functions $\psi_{q,p}: [0,1]\to \mathbb{R}$, all univariate and
continuous, such that
\begin{equation}\label{eq:Kolmogorov-Arnoldtheorem}
    f(x_1,...,x_n)=\sum_{q=1}^{2n+1} \Psi_q\left(\sum_{p=1}^{n}\psi_{q,p}(x_p)\right).
\end{equation}
\end{theorem}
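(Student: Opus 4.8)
The plan is to follow the classical Kolmogorov–Arnold construction rather than to attempt any shortcut, since the theorem is deep and its known proofs are intricate. I would first reduce the problem to a statement about a single continuous function on the cube: the heart of the matter is the existence of a fixed family of inner functions $\psi_{q,p}$, independent of $f$, such that the map $x \mapsto \bigl(\sum_p \psi_{1,p}(x_p), \dots, \sum_p \psi_{2n+1,p}(x_p)\bigr)$ embeds $[0,1]^n$ into $\mathbb{R}^{2n+1}$ in a sufficiently nondegenerate way. So the first step is to set up the right function space — continuous functions on the cube with the sup norm — and to isolate the combinatorial/geometric lemma about families of coverings of $[0,1]$ by small intervals at shifted scales.

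Second, I would construct the inner functions. The standard approach (following Kolmogorov's 1957 argument, as streamlined by Lorentz, Sprecher, and Hedberg) is to build the $\psi_{q,p}$ as monotone functions — in fact one can take them of the form $\psi_{q,p}(x) = \lambda_p \phi_q(x)$ or even use a single universal $\psi$ with rational shifts — so that for each point $x$ in the cube, at least one of the $2n+1$ "layers" indexed by $q$ separates the coordinates cleanly: the values $\sum_p \psi_{q,p}(x_p)$ land in disjoint intervals as $x$ ranges over a suitable grid. This is where a Baire category argument is typically invoked: the set of inner functions with the required separation property is shown to be a dense $G_\delta$ in the relevant complete metric space, hence nonempty.

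Third, given the inner functions, I would construct the outer functions $\Psi_q$ by an iterative approximation scheme. One builds $\Psi_q$ as a uniformly convergent series $\sum_k \Psi_q^{(k)}$, where at each stage the partial sum $\sum_q \Psi_q^{(k)}\bigl(\sum_p \psi_{q,p}(x_p)\bigr)$ approximates $f$ to within a factor like $(1 - \tfrac{1}{2n+1})^k$ of the current error; the separation property from step two guarantees that at least one layer is "free" to absorb a constant fraction of the residual on each grid cell without disturbing the others too much. Uniform convergence then yields continuous $\Psi_q$ with $\sum_q \Psi_q(\sum_p \psi_{q,p}(x_p)) = f(x)$ exactly.

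I expect the main obstacle to be step two — producing inner functions with the simultaneous separation property across all $2n+1$ layers and all scales of the grid refinement. Getting the quantitative bookkeeping right (how fine a grid, how much overlap is tolerable, ensuring the $G_\delta$ sets genuinely intersect) is the delicate technical core, and it is the part where a clean self-contained write-up is hardest; for the purposes of this paper I would cite the standard references (Kolmogorov, Arnold, Lorentz, Sprecher) and present the construction in outline, since the representation theorem is used here only as the structural motivation for the KAN architecture and not re-proved in full.
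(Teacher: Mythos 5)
Your outline of the classical Kolmogorov--Lorentz--Sprecher construction (inner functions with a separation property obtained via a Baire category argument, outer functions built as a uniformly convergent series absorbing a fixed fraction of the residual at each stage) is accurate, and your conclusion to cite the standard references rather than re-prove the theorem is exactly what the paper does: the theorem is stated as background and no proof is given, only citations to the literature. No gap to report.
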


In recent work on Spl-KANs \cite{liu2024kankolmogorovarnoldnetworks}, a generalized version of the Kolmogorov-Arnold representation is introduced, extending KANs to deeper architectures. In KANs, each "weight" operates as a small learnable function rather than a fixed value, and each node does not use a fixed non-linear activation function. Instead, each learnable activation function on the edges processes inputs and produces outputs, with the univariate functions being trainable. Simple summations are performed at each node, making the Kolmogorov-Arnold representation theorem naturally align with a two-layer KAN architecture (illustrated in Figure \ref{fig:KANrepresentation}). Each learnable univariate function can be parameterized either as a B-spline curve, as initially proposed, or using a mother wavelet as the functional basis, as suggested in \cite{bozorgasl2024wavkanwaveletkolmogorovarnoldnetworks}.

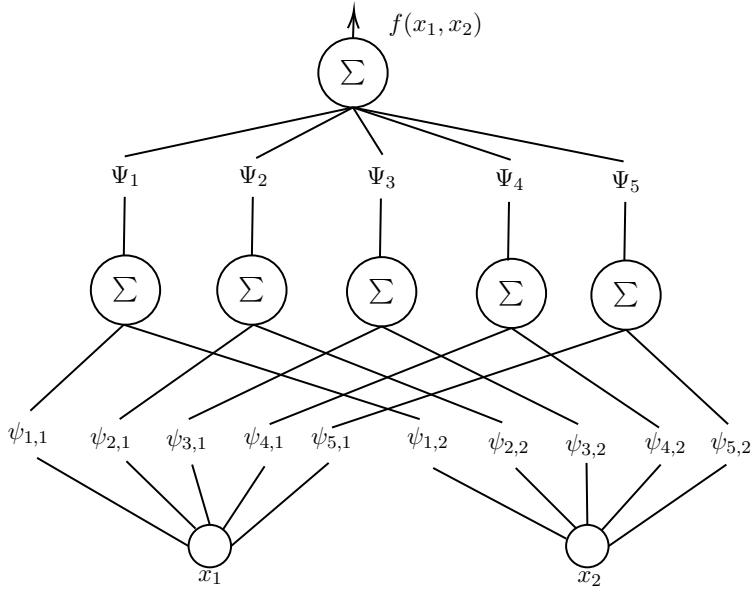
\begin{figure}[ht]
    \centering

\tikzset{every picture/.style={line width=0.75pt,scale=0.85,every node/.style={scale=0.85}}}

\begin{tikzpicture}[x=0.75pt,y=0.75pt,yscale=-1,xscale=1]
%uncomment if require: \path (0,444); %set diagram left start at 0, and has height of 444

%Shape: Circle [id:dp8990474143082385] 
\draw   (202.8,337.6) .. controls (202.8,330.64) and (208.44,325) .. (215.4,325) .. controls (222.36,325) and (228,330.64) .. (228,337.6) .. controls (228,344.56) and (222.36,350.2) .. (215.4,350.2) .. controls (208.44,350.2) and (202.8,344.56) .. (202.8,337.6) -- cycle ;
%Shape: Circle [id:dp8267083363935355] 
\draw   (426.8,337.6) .. controls (426.8,330.64) and (432.44,325) .. (439.4,325) .. controls (446.36,325) and (452,330.64) .. (452,337.6) .. controls (452,344.56) and (446.36,350.2) .. (439.4,350.2) .. controls (432.44,350.2) and (426.8,344.56) .. (426.8,337.6) -- cycle ;
%Straight Lines [id:da9362550738088875] 
\draw    (112.92,285.2) -- (202.8,337.6) ;
%Straight Lines [id:da8750471853763524] 
\draw    (347.92,291.2) -- (426.92,333.2) ;
%Straight Lines [id:da1236086738449762] 
\draw    (206.92,327.2) -- (165.92,287.2) ;
%Straight Lines [id:da5359043913464405] 
\draw    (432.92,327.2) -- (396.92,291.2) ;
%Straight Lines [id:da5015473301840048] 
\draw    (204.92,289.2) -- (215.4,325) ;
%Straight Lines [id:da6304870419190078] 
\draw    (222.92,328.2) -- (247.92,290.2) ;
%Straight Lines [id:da21864739304208824] 
\draw    (228,337.6) -- (285.92,288.2) ;
%Straight Lines [id:da41603677983122145] 
\draw    (439.4,325) -- (438.92,288.2) ;
%Straight Lines [id:da2623265420595893] 
\draw    (447.92,328.2) -- (482.92,289.2) ;
%Straight Lines [id:da03815881243999586] 
\draw    (452,337.6) -- (521.92,289.2) ;
%Shape: Circle [id:dp14149364289256638] 
\draw   (144.8,185.56) .. controls (144.8,174.21) and (154,165) .. (165.36,165) .. controls (176.71,165) and (185.92,174.21) .. (185.92,185.56) .. controls (185.92,196.92) and (176.71,206.13) .. (165.36,206.13) .. controls (154,206.13) and (144.8,196.92) .. (144.8,185.56) -- cycle ;
%Shape: Circle [id:dp307379316787332] 
\draw   (219.8,185.56) .. controls (219.8,174.21) and (229,165) .. (240.36,165) .. controls (251.71,165) and (260.92,174.21) .. (260.92,185.56) .. controls (260.92,196.92) and (251.71,206.13) .. (240.36,206.13) .. controls (229,206.13) and (219.8,196.92) .. (219.8,185.56) -- cycle ;
%Shape: Circle [id:dp32889981265642687] 
\draw   (296.8,186.56) .. controls (296.8,175.21) and (306,166) .. (317.36,166) .. controls (328.71,166) and (337.92,175.21) .. (337.92,186.56) .. controls (337.92,197.92) and (328.71,207.13) .. (317.36,207.13) .. controls (306,207.13) and (296.8,197.92) .. (296.8,186.56) -- cycle ;
%Shape: Circle [id:dp38592264423541467] 
\draw   (373.8,187.56) .. controls (373.8,176.21) and (383,167) .. (394.36,167) .. controls (405.71,167) and (414.92,176.21) .. (414.92,187.56) .. controls (414.92,198.92) and (405.71,208.13) .. (394.36,208.13) .. controls (383,208.13) and (373.8,198.92) .. (373.8,187.56) -- cycle ;
%Shape: Circle [id:dp32057909596667855] 
\draw   (441.8,188.56) .. controls (441.8,177.21) and (451,168) .. (462.36,168) .. controls (473.71,168) and (482.92,177.21) .. (482.92,188.56) .. controls (482.92,199.92) and (473.71,209.13) .. (462.36,209.13) .. controls (451,209.13) and (441.8,199.92) .. (441.8,188.56) -- cycle ;
%Straight Lines [id:da6773718025066668] 
\draw    (164.36,206.13) -- (108.92,257.2) ;
%Straight Lines [id:da9232700318214169] 
\draw    (339.92,262.2) -- (164.36,206.13) ;
%Straight Lines [id:da28286500966155925] 
\draw    (161.92,262.2) -- (241.36,206.13) ;
%Straight Lines [id:da5239306271291722] 
\draw    (388.92,264.2) -- (241.36,206.13) ;
%Straight Lines [id:da05920302195263738] 
\draw    (433.92,265.2) -- (317.36,207.13) ;
%Straight Lines [id:da5908362784057548] 
\draw    (202.92,262.2) -- (317.36,207.13) ;
%Straight Lines [id:da6023112499178518] 
\draw    (481.92,267.2) -- (394.36,208.13) ;
%Straight Lines [id:da7025428891952288] 
\draw    (250.92,265.2) -- (394.36,208.13) ;
%Straight Lines [id:da4826390455017371] 
\draw    (522.92,265.2) -- (462.36,209.13) ;
%Straight Lines [id:da08079590449705942] 
\draw    (287.92,267.2) -- (462.36,209.13) ;
%Straight Lines [id:da4161615844518183] 
\draw    (164.36,165) -- (164.92,130.2) ;
%Straight Lines [id:da13870208633321335] 
\draw    (240.36,165) -- (240.92,130.2) ;
%Straight Lines [id:da26227145214549985] 
\draw    (316.36,166) -- (316.92,131.2) ;
%Straight Lines [id:da6417052933939251] 
\draw    (392.36,168) -- (392.92,133.2) ;
%Straight Lines [id:da3546969726345617] 
\draw    (461.36,168) -- (461.92,133.2) ;
%Shape: Circle [id:dp9888953379657239] 
\draw   (279.8,56.56) .. controls (279.8,45.21) and (289,36) .. (300.36,36) .. controls (311.71,36) and (320.92,45.21) .. (320.92,56.56) .. controls (320.92,67.92) and (311.71,77.13) .. (300.36,77.13) .. controls (289,77.13) and (279.8,67.92) .. (279.8,56.56) -- cycle ;
%Straight Lines [id:da5536676563965586] 
\draw    (164.92,106.2) -- (300.36,77.13) ;
%Straight Lines [id:da6963731148962069] 
\draw    (242.92,107.2) -- (300.36,77.13) ;
%Straight Lines [id:da8968399527133779] 
\draw    (317.92,105.2) -- (300.36,77.13) ;
%Straight Lines [id:da23219882567616734] 
\draw    (393.92,108.2) -- (300.36,77.13) ;
%Straight Lines [id:da3498689456391124] 
\draw    (460.92,109.2) -- (300.36,77.13) ;
%Straight Lines [id:da4930766559215862] 
\draw    (300.36,36) -- (300.85,21.2) ;
\draw [shift={(300.92,19.2)}, rotate = 91.92] [color={rgb, 255:red, 0; green, 0; blue, 0 }  ][line width=0.75]    (10.93,-3.29) .. controls (6.95,-1.4) and (3.31,-0.3) .. (0,0) .. controls (3.31,0.3) and (6.95,1.4) .. (10.93,3.29)   ;

% Text Node
\draw (94,262.4) node [anchor=north west][inner sep=0.75pt]    {$\psi _{1,1}$};
% Text Node
\draw (143,265.4) node [anchor=north west][inner sep=0.75pt]    {$\psi _{2,1}$};
% Text Node
\draw (188,266.4) node [anchor=north west][inner sep=0.75pt]    {$\psi _{3,1}$};
% Text Node
\draw (234,265.4) node [anchor=north west][inner sep=0.75pt]    {$\psi _{4,1}$};
% Text Node
\draw (274,265.4) node [anchor=north west][inner sep=0.75pt]    {$\psi _{5,1}$};
% Text Node
\draw (331,265.4) node [anchor=north west][inner sep=0.75pt]    {$\psi _{1,2}$};
% Text Node
\draw (379,268.4) node [anchor=north west][inner sep=0.75pt]    {$\psi _{2,2}$};
% Text Node
\draw (425,269.4) node [anchor=north west][inner sep=0.75pt]    {$\psi _{3,2}$};
% Text Node
\draw (472,268.4) node [anchor=north west][inner sep=0.75pt]    {$\psi _{4,2}$};
% Text Node
\draw (511,268.4) node [anchor=north west][inner sep=0.75pt]    {$\psi _{5,2}$};
% Text Node
\draw (155,110.4) node [anchor=north west][inner sep=0.75pt]    {$\Psi _{1}$};
% Text Node
\draw (231,110.4) node [anchor=north west][inner sep=0.75pt]    {$\Psi _{2}$};
% Text Node
\draw (307,111.4) node [anchor=north west][inner sep=0.75pt]    {$\Psi _{3}$};
% Text Node
\draw (383,112.4) node [anchor=north west][inner sep=0.75pt]    {$\Psi _{4}$};
% Text Node
\draw (452,113.4) node [anchor=north west][inner sep=0.75pt]    {$\Psi _{5}$};
% Text Node
\draw (207,350.4) node [anchor=north west][inner sep=0.75pt]    {$x_{1}$};
% Text Node
\draw (432,351.4) node [anchor=north west][inner sep=0.75pt]    {$x_{2}$};
% Text Node
\draw (320,19.4) node [anchor=north west][inner sep=0.75pt]    {$f( x_{1} ,x_{2})$};
% Text Node
\draw (156,176.4) node [anchor=north west][inner sep=0.75pt]    {$\sum $};
% Text Node
\draw (231,176.4) node [anchor=north west][inner sep=0.75pt]    {$\sum $};
% Text Node
\draw (385,178.4) node [anchor=north west][inner sep=0.75pt]    {$\sum $};
% Text Node
\draw (453,179.4) node [anchor=north west][inner sep=0.75pt]    {$\sum $};
% Text Node
\draw (308,177.4) node [anchor=north west][inner sep=0.75pt]    {$\sum $};
% Text Node
\draw (291,47.4) node [anchor=north west][inner sep=0.75pt]    {$\sum $};

\end{tikzpicture}
    \caption{Kolmogorov-Arnold representation as KAN of [2,5,1] layers, where each is a learnable function, and each node is a simple summation.}
    \label{fig:KANrepresentation}
\end{figure}

\subsection{Wavelet Kolmogorov-Arnold networks (Wav-KANs)}

Based on the principles of wavelet transformation \cite{STEPHANE200989,STEPHANE20091}, the approach utilizes a base function known as a mother wavelet, which acts as a flexible template. This mother wavelet can be scaled and shifted, allowing each learnable function in the KAN to adapt dynamically to capture different patterns and frequencies in the data. Wav-KANs effectively build complex representations that capture low- and high-frequency components within a unified framework by tuning these scales and shifts. As a result, the forward pass in Wav-KANs is:

Let $\psi$ denote the mother wavelet, and let $x^{(l)}\inrn$, We then construct the matrices  $X^{(l)}$, $W^{(l+1)}$, $T^{(l+1)}$ and $S^{(l+1)}$ as follows:

\begin{equation*}
    X^{(l)}=\left(
    \begin{matrix}
        (x^{(l)})^T\\
        (x^{(l)})^T\\
        \vdots \\
        (x^{(l)})^T\\
    \end{matrix}
    \right)\in \mathbb{R}^{m\times n}, \hspace{0.2cm}
    W^{(l+1)}=\left(
    \begin{matrix}
        W^{(l+1)}_{11}& \cdots& W^{(l+1)}_{1n} \\
        \vdots &\ddots&\vdots \\
        W^{(l+1)}_{m1}& \cdots& W^{(l+1)}_{mn}\\
    \end{matrix}
    \right),
\end{equation*}

\begin{equation*}
    T^{(l+1)}=\left(
    \begin{matrix}
        T^{(l+1)}_{11}& \cdots& T^{(l+1)}_{1n} \\
        \vdots &\ddots&\vdots \\
        T^{(l+1)}_{m1}& \cdots& T^{(l+1)}_{mn}\\
    \end{matrix}
    \right), \hspace{0.2cm}
    S^{(l+1)}=\left(
    \begin{matrix}
        S^{(l+1)}_{11}& \cdots& S^{(l+1)}_{1n} \\
        \vdots &\ddots&\vdots \\
        S^{(l+1)}_{m1}& \cdots& S^{(l+1)}_{mn}\\
    \end{matrix}
    \right),
\end{equation*}
\noindent
where $(x^{(l)})^T \in \mathbb{R}^{1\times n}$ is the input of layer $l$, and $W^{(l+1)}$, $T^{(l+1)}$ and $S^{(l+1)}$ are the learnable parameters of the Wav-KAN for $l=0,...,L-1$. 

\noindent
Defining, $\psi^{(l+1)}_{ij}:=\psi\left (\displaystyle\frac{x^{(l)}_j-T^{(l+1)}_{ij}}{S^{(l+1)}_{ij}}\right)$, we can introduce the matrix $\Psi^{(l+1)}(X^{(l)})$ as:

\begin{equation*}
    \Psi^{(l+1)}(X^{(l)})=\left(
    \begin{matrix}
        W^{(l+1)}_{11}\psi^{(l+1)}_{11}& W^{(l+1)}_{12}\psi^{(l+1)}_{12}& \cdots& W^{(l+1)}_{1n}\psi^{(l+1)}_{1n} \\
        W^{(l+1)}_{12}\psi^{(l+1)}_{12}& W^{(l+1)}_{22}\psi^{(l+1)}_{22}& \cdots& W^{(l+1)}_{2n}\psi^{(l+1)}_{2n}\\
        \vdots & \vdots&\ddots&\vdots \\
        W^{(l+1)}_{m1}\psi^{(l+1)}_{m1}& W^{(l+1)}_{m2}\psi^{(l+1)}_{m2}& \cdots& W^{(l+1)}_{1n}\psi^{(l+1)}_{mn}\\
    \end{matrix}\right).
\end{equation*}

\noindent
Thus, $\Psi^{(l+1)}(X^{(l)})$ represents the learnable functions as scaled and shifted versions of the mother wavelet, connecting the $j-th$ neuron in layer $l$ to the $i-th$ neuron in layer $l+1$.

As defined in \cite{bozorgasl2024wavkanwaveletkolmogorovarnoldnetworks}, the operator $\tau_0$ acts on the matrix $\Psi^{(l+1)}(X^{(l)})$ by summing the elements in each row to produce the resulting vector $x^{(l+1)}$. This process is defined as follows:

\begin{equation*}
\begin{split}
    \tau_0 (\Psi^{(l+1)}(X^{(l)}))&= x^{(l+1)},\\
     x^{(l+1)}_i = \sum_{j=1}^n \Psi^{(l+1)}(X^{(l)})_{ij}&=\sum_{j=1}^n W^{(l+1)}_{ij}\psi^{(l+1)}_{ij}. 
\end{split}
\end{equation*}

If $X^{(0)}$ is taken as the input matrix containing only the input vector as rows, then, for the entire network, the output after $L$ layers is:

\begin{equation*}
\begin{split}
    f_{WK}(x)&=x^{(L)}= \tau_0 (\Psi^{(L)}(X^{(L-1)}))= \tau_0\left(\Psi^{(L)}\left(\begin{matrix}
        (\tau_0 (\Psi^{L-1}(X^{(L-2)})))^T\\
        (\tau_0 (\Psi^{L-1}(X^{(L-2)})))^T\\
        \vdots\\
        (\tau_0 (\Psi^{L-1}(X^{(L-2)})))^T\\
    \end{matrix}\right)\right)\\
    &=\cdots = \tau_0\left(\Psi^{(L)}\left(\begin{matrix}
        (\tau_0 (\Psi^{L-1}\cdots (\tau_0(\Psi^{(1)}(X^{(0)})))))^T\\
        (\tau_0 (\Psi^{L-1}\cdots (\tau_0(\Psi^{(1)}(X^{(0)})))))^T\\
        \vdots\\
        (\tau_0 (\Psi^{L-1}\cdots (\tau_0(\Psi^{(1)}(X^{(0)})))))^T\\
    \end{matrix}\right)\right).\\
\end{split} 
\end{equation*}

%Practical Applications: Wavelet KAN in Action
For example, we can approximate the function $f(x)=4x^5$ using a Wav-KAN with two layers  (see Figure \ref{fig:Wav-KANExample}) and three hidden units in the middle layer, resulting in a Wav-KAN structure with a [1, 3, 1] shape. Choosing the Morlet wavelet $\psi(x)=e^{-\frac{1}{2}x^2}\cos(2x)$ as the mother wavelet, the learnable parameters are $W^1,T^1,S^1$ and $W^2,T^2,S^2$. Using only a few parameters and training the model over $5000$ epochs with the Adam optimizer, the mean squared error (MSE) converges to approximately $0.0014$, demonstrating the model’s efficiency and accuracy in approximating the function across the target interval.  

\begin{figure}[ht]
    \centering
    \includegraphics[scale=0.9]{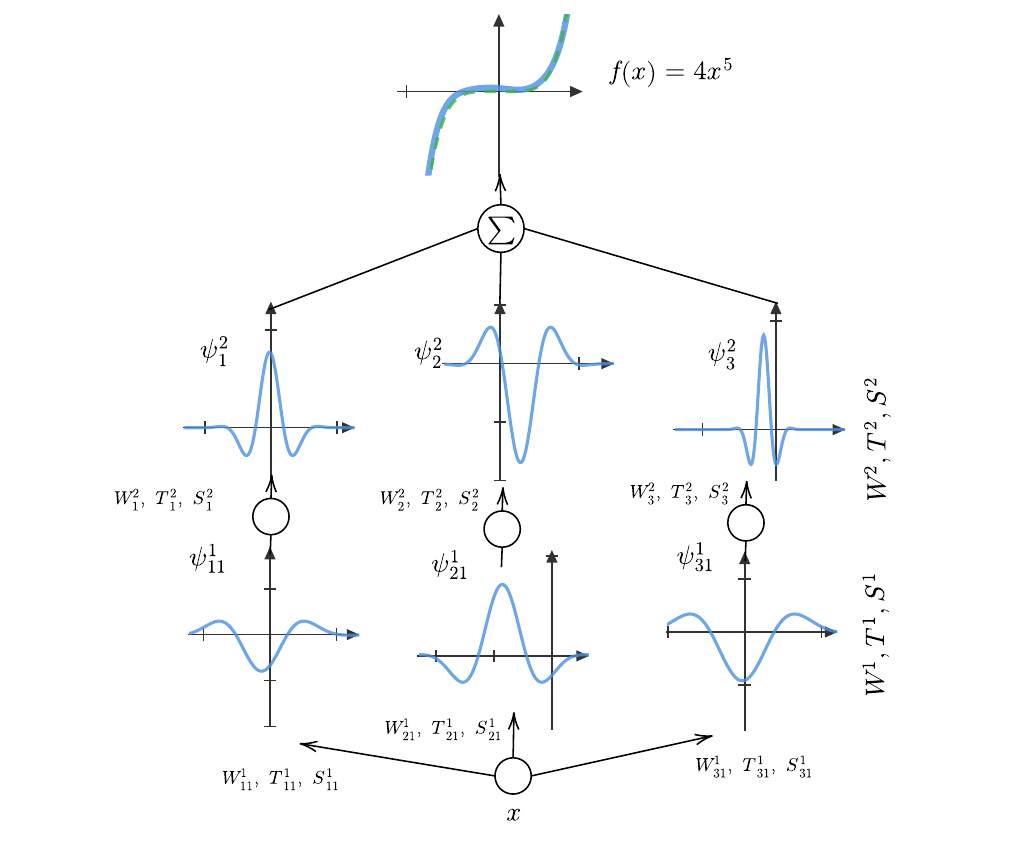}
    \caption{Example of a Wav-KAN used to approximate the function $f(x)=4x^5$ over the interval $[-1,1]$ with a [1, 3, 1] layer structure. Each edge represents a scaled and shifted instance of the mother wavelet, chosen as the Morlet wavelet function, defined by $e^{-\frac{1}{2}x^2}\cos(2x)$.}
    \label{fig:Wav-KANExample}
\end{figure}

\subsection{On the study of spectral bias through the lens of the Neural Tangent Kernel}
 % definite NTK
To analyze the capability of Wav-KANs in fitting functions with both low- and high-frequency components, an examination is conducted through the lens of the Neural Tangent Kernel (NTK). Building on the approach of \cite{WANG2021113938fourierfeaturespINNs}, we explore how Wav-KANs overcome spectral limitations by controlling the decay of NTK eigenvalues through the frequency of the mother wavelet. This tuning provides a direct mechanism for adjusting the NTK’s sensitivity to different frequency components, thus enabling a more balanced learning of both low- and high-frequency patterns. To illustrate this behavior, we first introduce the NTK matrix and analyze its properties in the context of Wav-KANs.

Following the methodology presented in \cite{WANG2021113938fourierfeaturespINNs} and \cite{wang2024kolmogorovarnoldinformedneural}, we adopt a similar structure for analyzing the behavior of our Wav-KAN. Let $f_{WK}(x;\theta(t))$ denote the Wav-KAN with parameters $\theta(t)=\{W^1(t),T^1(t),S^1(t),...,W^L(t),T^L(t),S^L(t)\}$, where $\theta(t)$ are the parameters of the network at iteration $t$. Given a training data set $\{x^r,y^r\}_{r=1}^N$, where $x^r$ are inputs, and $y^i$ are the corresponding outputs, we train the network by minimizing the least-squares loss function:

\begin{equation}\label{eq:squearelossfunction}
    \mathcal{L}(\theta)=\frac{1}{N}\sum_{r=1}^N |f_{WK}(x^r;\theta(t))-y^r|^2.
\end{equation}

Following Jacot et al. \cite{NTKjacot2018}, the neural tangent kernel operator $K$ by: 
\begin{equation}
    K(x^r,x^s)=\left<\frac{\partial f_{WK}(x^r;\theta(t))}{\partial \theta},\frac{\partial f_{WK}(x^s;\theta(t))}{\partial \theta}\right>.
\end{equation}

Notably, NTK theory shows that under gradient descent dynamics with an infinitesimally small learning rate (gradient flow), the kernel $K$ converges to a deterministic kernel $K_{ntk}$,  and remains constant during training as the width of the network grows to infinity.  This kernel $K_{ntk}$ is computed for all training points and forms an 
$N\times N$ positive semi-definite matrix. Furthermore, under the asymptotic conditions presented in \cite{liu2024kankolmogorovarnoldnetworks}, the dynamics $f_{WK}(X;\theta(t))=(f_{WK}(x^1;\theta(t)),...,f_{WK}(x^N;\theta(t)))^T$  relative to the target $Y=(y^1,...,y^N)^T$ follow:

\begin{equation}\label{eq:diffNTK}
    \begin{split}
        \frac{d f_{WK}(X;\theta(t))}{dt}=-K_{ntk}[f_{WK}(X;\theta(t))-Y].
    \end{split}
\end{equation}

Solving \eqref{eq:diffNTK}, we got

\begin{equation}\label{eq:solutionsystemNTK}
   f_{WK}(X;\theta(t))=(I-e^{-tK_{ntk}})Y.
\end{equation}

Since $K_{ntk}$ is a real symmetric positive semi-definite matrix, we can take its spectral decomposition $K_{ntk}=Q\Lambda Q^T$, where $Q$ is an
orthogonal matrix whose $i-th$ column is the eigenvector $q_i$ of $K_{ntk}$ and $\Lambda$ is a diagonal matrix whose diagonal entries $\lambda_i$ are the corresponding eigenvalues. Hence, we can rewrite \eqref{eq:solutionsystemNTK} as:

\begin{equation}
\begin{split}
f_{WK}(X;\theta(t))&= (I-Qe^{-\Lambda t}Q^T)Y,\\ 
Q^T[f_{WK}(X;\theta(t))-Y]&= -e^{-\Lambda t }Q^TY.
\end{split}
\end{equation}
Thus,

\begin{equation}
    \left(
    \begin{matrix}
        q_1^T\\
        q_2^T\\
        \vdots\\
        q_N^T
    \end{matrix}
    \right)\left(
    \begin{matrix}
        f_{WK}(x^1;\theta(t))-y^1\\
        f_{WK}(x^2;\theta(t))-y^2\\
        \vdots\\
        f_{WK}(x^N;\theta(t))-y^N
    \end{matrix}
    \right) =- \left(
    \begin{matrix}
        e^{-\lambda_1t} & & &\\
        & e^{-\lambda_2t} & &\\
        & & \ddots &\\
        & & & e^{-\lambda_Nt}
    \end{matrix}
    \right)Q^T Y.
\end{equation}

 We observe that the residual  $f(\textbf{x}^i;\theta)-\textbf{y}^i$
is inversely proportional to the eigenvalues of $K_{ntk}$. In standard neural networks, these eigenvalues typically decay as the frequency of the associated eigenfunctions rises, resulting in a slower convergence rate for the high-frequency components of the target function \cite{FourierFeaturesTancik2020,cao2020understandingspectralbiasdeep,rahaman2019spectralbiasneuralnetworks,WANG2021113938fourierfeaturespINNs}. This behavior, known as spectral bias, represents a fundamental limitation of deep networks, which tend to prioritize low-frequency features and struggle with higher frequencies.

Suppose we can demonstrate control over the decay rate of NTK eigenvalues in Wav-KANs by tuning the frequency of the mother wavelet. We can then directly influence the network's convergence speed across different frequency components. Specifically, this would enable Wav-KANs to prioritize and accelerate convergence on high-frequency components, thus solving the spectral bias problem. In the following, we analyze the behavior of Wav-KANs to evaluate this control mechanism.

%Theorems with NKT of Wav-KANs

To analyze the training behavior of Wav-KANs, consider a Wav-KAN model with a single layer, where $f_{WK}(x;\theta)$ is a function of input $x\inrn$ and target output $y\in R$ with $\theta=\{W,S,T\}$ representing the learning parameters of the model; then:

\begin{equation}
\begin{split}
    f_{WK}(x;\theta)&= \sum_{i=1}^n W_i \psi (\frac{x_i - T_i}{S_i})\\
    &=\sum_{i=1}^n W_i \psi_i(x_i)=\left<W,\Psi(X)\right>,
\end{split}
\end{equation}
where, $W$, $T$ and $S$ $\in \mathbb{R}^{1\times n}$. Here, $W$ represents the weight parameters, $T$ the translation parameters, and $S$ the scaling parameters for each input dimension, and each $\psi_i(x)$ is a scaled and shifted version of the mother wavelet $\psi$.

The kernel matrix $K$ is then defined by:
\begin{equation}
    K_{rs} =  K(x^r,x^s)= \left<\frac{\partial f_{WK}(x^r,\theta)}{\partial \theta}, \frac{\partial f_{WK} (x^s,\theta)}{\partial \theta} \right>,
\end{equation}
with $x^r$ and $x^s$ inputs.

Calculating the full kernel matrix $K$ can be complex when 
$S$ and $T$ are variable, so as a simplification, we consider a sub-case where $\theta=\{W\}$ and keep $T$ and $S$ fixed. Under this assumption, the kernel operator becomes:

\begin{equation}\label{eq:kerneldotproduct}
\begin{split}
    K(x^r,x^s)&= \left<\frac{\partial f_{WK}(x^r,\theta)}{\partial \theta}, \frac{\partial f_{WK} (x^s,\theta)}{\partial \theta} \right>,\\
    &=\left<\Psi(X^r),\Psi(X^s)\right>,\\
    &= \sum_{i=1}^n \psi_i(x_i^r)\psi_i(x_i^s).
\end{split}
\end{equation}

To further study the kernel’s eigensystem, we analyze the behavior of $K$ in the limit as the number of data points approaches infinity. In this limit, the eigensystem of the finite kernel matrix $K$ approximates the eigensystem of the function-based kernel $K(x^r,x^s)$, which satisfies the following integral equation \cite{equationKerneloperator2005ShaweTylor,WANG2021113938fourierfeaturespINNs}:

\begin{equation}\label{eq:integralequation}
    \int_C K(x^r,x^s)g(x^s)dx^s=\lambda g(x^r),
\end{equation}
where $C$ is a compact set, and $\lambda$ and $g(x^r)$ are the eigenvalue and eigenfunction of the kernel, respectively. This equation allows us to study how the spectral properties of the kernel, including its eigenvalues, depend on the choice of the mother wavelet and its scaling parameters, thus providing insights into the convergence behavior of Wav-KANs.

Deriving an equation for the eigenfunction $g(x^r)$ corresponding to non-zero eigenvalues is often a complex task, and the resulting expression is generally more intricate than what is typically seen for Fourier feature-based eigenfunctions \cite{WANG2021113938fourierfeaturespINNs}. Obtaining such an expression requires making several reasonable assumptions, which we outline below.

\begin{proposition}
\label{pro1}
Let $K(x^r,x^s)=\sum_{i=1}^n \psi_i(x_i^r)\psi_i(x_i^s)$, then the eigenfunction $g(x^r)$ corresponding to non-zero eigenvalues satisfying the the following equation
\begin{equation}\label{eq:g(x)equation}
    \left<\omega_0(x^r),\nabla g(x^r)\right>=g(x^r),
\end{equation}
where $\omega_0(x^r)=(h_1(x_1^r),h_2(x_2^r),...,h_n(x_n^r))^T$ and 
\begin{equation*}
    h_i(x_i^r)=\left\{ \begin{matrix}
        \frac{S_i}{\omega_i(x_i^r)} & & \omega_i(x_i^r)\neq 0\\
        0 & & otherwise
    \end{matrix}\right. ,
\end{equation*}
where $\omega_i(x_i^r)$ satisfies the equation:
\begin{equation*}
    \psi_i'(x_i^r)=\omega_i(x_i^r)\psi_i(x_i^r),
\end{equation*}
for $i=1,...,n$.
\end{proposition}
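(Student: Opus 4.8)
The plan is to exploit the fact that the kernel $K(x^r,x^s)=\sum_{i=1}^n\psi_i(x_i^r)\psi_i(x_i^s)$ is a finite sum of $n$ separable rank-one terms, so that its integral operator has rank at most $n$ and every eigenfunction $g$ associated with a nonzero eigenvalue must lie in the (at most $n$-dimensional) span of $\{\psi_i(x_i)\}_{i=1}^n$. Concretely, substituting $K$ into the eigenvalue equation \eqref{eq:integralequation} gives
\[
\sum_{i=1}^n \psi_i(x_i^r)\,\Big(\int_C \psi_i(x_i^s)\,g(x^s)\,dx^s\Big)=\lambda\,g(x^r);
\]
each bracketed integral is a constant $c_i$ independent of $x^r$, so for $\lambda\neq 0$ we obtain the closed form $g(x^r)=\sum_{i=1}^n a_i\,\psi_i(x_i^r)$ with $a_i:=c_i/\lambda$. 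The structural feature that makes the rest work is that the $i$-th summand depends on $x^r$ only through its $i$-th coordinate, so $\nabla g$ acts diagonally.

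From here the proof is a direct computation. I would differentiate the closed form coordinatewise; since $\psi_i(x_i)=\psi\big((x_i-T_i)/S_i\big)$, the chain rule gives $\partial_{x_i}g(x^r)=\frac{a_i}{S_i}\,\psi_i'(x_i^r)$, where $\psi_i'$ denotes the derivative of the mother wavelet evaluated at the rescaled point — the Jacobian factor $1/S_i$ being precisely what the $S_i$ in $h_i=S_i/\omega_i$ is designed to cancel. Using the defining relation $\psi_i'(x_i^r)=\omega_i(x_i^r)\psi_i(x_i^r)$ of the logarithmic derivative $\omega_i$, one rewrites $\partial_{x_i}g(x^r)=\frac{a_i}{S_i}\,\omega_i(x_i^r)\,\psi_i(x_i^r)$ wherever $\omega_i$ is defined, and then
\[
\big\langle \omega_0(x^r),\nabla g(x^r)\big\rangle=\sum_{i=1}^n h_i(x_i^r)\,\partial_{x_i}g(x^r)=\sum_{i=1}^n\frac{S_i}{\omega_i(x_i^r)}\cdot\frac{a_i}{S_i}\,\omega_i(x_i^r)\,\psi_i(x_i^r)=\sum_{i=1}^n a_i\,\psi_i(x_i^r)=g(x^r),
\]
which is exactly \eqref{eq:g(x)equation}.

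The hard part will be the bookkeeping on the degenerate set where the construction of $\omega_i$ and $h_i$ breaks down: points at which $\psi_i(x_i^r)=0$, where $\omega_i$ blows up and $h_i=S_i/\omega_i$ must be read as its limiting value $0$, and points at which $\psi_i'(x_i^r)=0$, where $\omega_i=0$ and $h_i$ is set to $0$ by the piecewise definition. One has to check that the termwise cancellation above still holds in these cases — for instance at a zero of $\psi_i$ both $h_i\,\partial_{x_i}g$ and the target contribution $a_i\,\psi_i(x_i^r)$ vanish — and to restrict the asserted identity to the open, dense complement of the critical-point set of the $\psi_i$'s, which for an analytic mother wavelet (such as the Morlet wavelet used in the examples) intersects any compact $C$ in a Lebesgue-null set. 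I would therefore state the derivation on that generic set. A last, more routine ingredient is to invoke the cited convergence of the empirical Gram matrix $K$ to the integral operator \cite{equationKerneloperator2005ShaweTylor,WANG2021113938fourierfeaturespINNs}, which is what licenses passing from the finite kernel matrix to equation \eqref{eq:integralequation} to begin with.
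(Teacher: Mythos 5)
Your proof is correct and follows essentially the same route as the paper's: both substitute the separable kernel into the integral equation \eqref{eq:integralequation}, observe that the inner integrals $\int_C\psi_i(x_i^s)g(x^s)\,dx^s$ are constants, and use the relation $\psi_i'=\omega_i\psi_i$ together with the $1/S_i$ chain-rule factor to arrive at \eqref{eq:g(x)equation}. The only difference is organizational --- you first extract the closed form $g(x^r)=\sum_i a_i\psi_i(x_i^r)$ from the rank-$n$ structure and then verify the identity by direct differentiation, whereas the paper differentiates the eigen-equation coordinatewise and sums; your version also makes explicit the degenerate set where $\psi_i$ or $\psi_i'$ vanishes, which the paper leaves implicit.
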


\begin{proof}
    The proof can be found in Appendix \ref{Appendix A}.
\end{proof}

Solving the equation \eqref{eq:g(x)equation} presented in Proposition \ref{pro1} can become intractable due to its complexity. Therefore, as done in \cite{WANG2021113938fourierfeaturespINNs}, we will consider a simplified scenario to gain insights into the behavior of the eigenfunctions and their corresponding eigenvalues. Specifically, we will focus on the case where $x\inr$, and the compact domain $C=[0,1]$ lead to Propositions \ref{prop2} and \ref{prop3} as described below.

\begin{proposition}\label{prop2}
    Let $K(x^r,x^s)= \psi_1(x^r)\psi_1(x^s)$, then the corresponding eigenfunctions $g(x^r)$ must have the form of:
    \begin{equation}\label{eq:g(x)}
        g(x^r)=C_1 \psi_1^{\frac{1}{S}}(x^r),
    \end{equation}
where $C_1$ is a constant and $S$ is the scale parameter to the mother wavelet.  

\end{proposition}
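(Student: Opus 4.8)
The plan is to specialize Proposition~\ref{pro1} to the one–dimensional case $n=1$, $x^r\in[0,1]$, and read off an explicit ODE for the eigenfunction. In dimension one the vector $\omega_0(x^r)$ collapses to the single scalar $h_1(x^r)=S/\omega_1(x^r)$ (on the set where $\omega_1\neq 0$), and the inner product $\langle\omega_0(x^r),\nabla g(x^r)\rangle$ becomes simply $h_1(x^r)\,g'(x^r)$. Hence equation~\eqref{eq:g(x)equation} reduces to the first–order linear ODE
\begin{equation*}
    \frac{S}{\omega_1(x^r)}\,g'(x^r)=g(x^r),
    \qquad\text{i.e.}\qquad
    \frac{g'(x^r)}{g(x^r)}=\frac{\omega_1(x^r)}{S}.
\end{equation*}

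Next I would integrate this separable equation. Recall from Proposition~\ref{pro1} that $\omega_1$ is defined by $\psi_1'(x^r)=\omega_1(x^r)\psi_1(x^r)$, so $\omega_1(x^r)=\psi_1'(x^r)/\psi_1(x^r)=\big(\log\psi_1(x^r)\big)'$ wherever $\psi_1(x^r)\neq 0$. Substituting this into the right–hand side gives $\big(\log g(x^r)\big)' = \tfrac{1}{S}\big(\log\psi_1(x^r)\big)'$, and integrating both sides yields $\log g(x^r) = \tfrac{1}{S}\log\psi_1(x^r) + \text{const}$, hence
\begin{equation*}
    g(x^r)=C_1\,\psi_1^{\frac{1}{S}}(x^r),
\end{equation*}
which is exactly \eqref{eq:g(x)}. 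I would note that $C_1$ absorbs the integration constant and is later fixed (up to sign) by a normalization of the eigenfunction, and that this argument determines the \emph{form} of any eigenfunction associated with a nonzero eigenvalue — consistent with the ``must have the form of'' phrasing in the statement.

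The main obstacle is rigor around the zero set of $\psi_1$ (and of $\omega_1$): the derivation of $\omega_1=(\log\psi_1)'$ and the separation of variables are only valid on intervals where $\psi_1(x^r)\neq 0$, and the definition of $h_1$ in Proposition~\ref{pro1} explicitly sets $h_1=0$ where $\omega_1=0$. I would handle this by working on each maximal open subinterval of $[0,1]$ on which $\psi_1$ does not vanish, obtaining the stated closed form there, and then invoking continuity of $g$ (an eigenfunction of an integral operator with continuous kernel is continuous) together with the smoothness of the mother wavelet to argue that the constant $C_1$ is the same across adjacent intervals and that the formula extends to the closure; for the Morlet and similar analytic wavelets the zero set is discrete, so this patching is harmless. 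A secondary point worth a remark is that $1/S$ need not be an integer, so $\psi_1^{1/S}$ should be read as $\exp\!\big(\tfrac{1}{S}\log\psi_1\big)$ on the positivity intervals (or with an appropriate branch/absolute value), but since only the functional form is claimed this does not affect the conclusion.
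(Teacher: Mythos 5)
Your proposal is correct and follows essentially the same route as the paper's Appendix B: reduce equation \eqref{eq:g(x)equation} to the one-dimensional ODE $g = h_1\, g'$, use $\psi_1' = \omega_1 \psi_1$ to identify $\omega_1$ with the logarithmic derivative of $\psi_1$, and integrate to obtain $g = C_1\psi_1^{1/S}$. Your additional remarks on the zero set of $\psi_1$ and on interpreting the non-integer power $\psi_1^{1/S}$ address genuine gaps that the paper's own proof passes over silently, but they do not change the argument.
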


\begin{proof}
    The proof can be found in Appendix \ref{Appendix B}.
\end{proof}

For this analysis, we consider the compact domain $C=[0,1]$ and select the Morlet wavelet as the mother wave function. Substituting the expression from equation \eqref{eq:g(x)} in Proposition \ref{prop2} into the integral equation \eqref{eq:integralequation} generally leads to a highly complex formulation. Although the integrand can be explicitly defined, it may lack an elementary antiderivative, requiring numerical integration to determine the precise value of the eigenvalue $\lambda$. However, we can approximate the eigenvalue by making assumptions about the function’s domain and constraining the scaling and translation parameters, thereby enabling us to construct the target function to be learned by the Wav-KAN.

\begin{proposition}\label{prop3}
Let $K(x^r,x^s)= \psi_1(x)\psi_1(x^s)$ and $\psi(x)=e^{-\frac{1}{2}x^2}cos(bx)$, $S=1$ and $T\in [0,1]$, then the non-zero eigenvalue $\lambda$ is lower bounded by:
\begin{equation}
    \frac{1}{4}e^{-4(b\frac{x^r-T}{S})^2}\leq \lambda,
\end{equation}
and if $x^r\in [0,1]$ then:
\begin{equation}\label{eq:controllingfreq}
    \frac{1}{4}e^{-4(b({1-T}))^2}\leq \lambda.
\end{equation}
\end{proposition}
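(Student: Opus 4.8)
The plan is to start from the eigenfunction formula in Proposition~\ref{prop2} specialized to $S=1$, namely $g(x^r)=C_1\psi_1(x^r)$, and plug it into the integral equation~\eqref{eq:integralequation}. With the kernel $K(x^r,x^s)=\psi_1(x^r)\psi_1(x^s)$ and domain $C=[0,1]$, this gives
\begin{equation*}
    \lambda\, C_1 \psi_1(x^r) = \int_0^1 \psi_1(x^r)\psi_1(x^s)\, C_1 \psi_1(x^s)\, dx^s = C_1\,\psi_1(x^r)\int_0^1 \psi_1(x^s)^2\, dx^s,
\end{equation*}
so that the unique non-zero eigenvalue is exactly $\lambda=\int_0^1 \psi_1(x^s)^2\, dx^s$, where $\psi_1(x)=\psi\bigl((x-T)/S\bigr)=e^{-\frac12 (x-T)^2}\cos\!\bigl(b(x-T)\bigr)$ after setting $S=1$. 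The task then reduces to bounding this integral from below.

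The key step is the lower bound on $\int_0^1 \psi_1(x^s)^2\, dx^s$. I would substitute $u = x^s - T$ (so $u$ ranges over $[-T,1-T]$, an interval of length $1$ containing $0$), giving $\int \psi_1^2 = \int e^{-u^2}\cos^2(bu)\, du$ over that unit-length interval. Using $\cos^2(bu) = \tfrac12\bigl(1+\cos(2bu)\bigr)\geq 0$ and, more crudely, simply restricting the range of integration and bounding the integrand below by its minimum value on a suitable sub-interval: on any unit-length interval we can find a sub-interval of length $1/2$ on which $|bu|$ is controlled, and there $e^{-u^2}\cos^2(bu)$ is at least (roughly) $e^{-(\,\cdot\,)^2}\cos^2(\,\cdot\,)$ evaluated at the worst point. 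The cleanest route to the stated bound $\tfrac14 e^{-4(b(x^r-T)/S)^2}$ is: bound $\lambda \geq \tfrac12 \cdot (\text{length of a sub-interval}) \cdot \min(e^{-u^2}) \cdot \min(\cos^2\theta)$-type estimate, then notice the right-hand side is written in terms of the variable $x^r$ only because the bound is being (somewhat loosely) tied to the evaluation point; since $x^r\in[0,1]$ and $T\in[0,1]$ we have $|x^r-T|\le 1-T$ when $x^r\le 1$, hence $e^{-4(b(x^r-T)/S)^2}\ge e^{-4(b(1-T)/S)^2}$, which with $S=1$ is $e^{-4(b(1-T))^2}$, yielding~\eqref{eq:controllingfreq}.

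Concretely, the second inequality~\eqref{eq:controllingfreq} follows from the first purely by monotonicity of $t\mapsto e^{-t^2}$: since $0\le T\le 1$ and $0\le x^r\le 1$, the distance $|x^r - T|$ is at most $\max(T, 1-T)\le 1$, and in the regime the proposition cares about (worst case $x^r$ near the endpoint far from $T$) one has $|x^r-T|\le 1-T$, so $4(b(x^r-T)/S)^2 \le 4(b(1-T)/S)^2$ and the exponential only gets larger when we pass to the bound. I would also remark that $\lambda$ does not actually depend on $x^r$ (it is a fixed number), so the appearance of $x^r$ on the left-hand side of the first inequality should be read as: the bound holds for every choice of $x^r\in[0,1]$, and one then optimizes/specializes as above.

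The main obstacle I anticipate is making the intermediate Gaussian–cosine integral estimate both honest and clean enough to produce exactly the constants $\tfrac14$ and the factor $4$ in the exponent, since $\int_0^1 e^{-(x-T)^2}\cos^2(b(x-T))\,dx$ has no elementary closed form. The factor $\tfrac14$ strongly suggests the argument pairs $\tfrac12$ from $\cos^2 = \tfrac12(1+\cos 2\theta)$ with another $\tfrac12$ from integrating over a half-length sub-interval, and the $4 = 2^2$ in the exponent suggests a change of variables or a doubling ($2bu$) being absorbed into a crude $\cos^2\theta \ge e^{-\theta^2}$-style inequality valid on a bounded range; pinning down the precise sub-interval and the precise elementary inequality ($\cos^2\theta\ge$ some Gaussian, or $e^{-u^2}\ge$ something) so that everything lines up to the claimed constants is the delicate part. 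Everything else — the reduction via Proposition~\ref{prop2}, the substitution, and the final monotonicity step — is routine.
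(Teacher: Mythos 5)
Your reduction is actually cleaner than the paper's: setting $S=1$ in Proposition \ref{prop2} and substituting $g=C_1\psi_1$ into \eqref{eq:integralequation} gives the single non-zero eigenvalue of the rank-one kernel as $\lambda=\int_0^1\psi_1^2(x^s)\,dx^s$, independent of $x^r$. (The paper proceeds differently: it substitutes $g=C_1\psi_1^{1/S}$ with $S$ still general, keeps the resulting $x^r$-dependent prefactor $\psi_1^{S+1}(x^r)$ in its expression for $\lambda$, and only then specializes $S=1$, $T\in[0,1]$; that prefactor is where the $x^r$ in the stated bound comes from.) The genuine gap in your proposal is that the inequality which is the actual content of the proposition --- $\tfrac14 e^{-4b^2(x^r-T)^2}\leq\lambda$, i.e.\ a quantitative lower bound on $\int_0^1 e^{-(x-T)^2}\cos^2\bigl(b(x-T)\bigr)\,dx$ yielding precisely the constant $\tfrac14$ and the factor $4$ in the exponent --- is never proved. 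You explicitly defer it as ``the delicate part'' and only speculate about how $\tfrac14$ and $4$ might arise (half from $\cos^2=\tfrac12(1+\cos 2\theta)$, half from a sub-interval, a $\cos^2\theta\ge e^{-\theta^2}$-type inequality), without fixing the sub-interval or the elementary inequality and verifying the constants. Since this estimate is exactly what links the eigenvalue's size to the wavelet frequency $b$ --- the point of the proposition --- the proof is not complete. (The paper's Appendix \ref{Appendix C} is admittedly terse here too, asserting the bound ``since $\cos^2(b(x-T))$ provides a controlling factor,'' but your blind proposal cannot lean on that.)

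A smaller issue: you justify passing from the first display to \eqref{eq:controllingfreq} via $|x^r-T|\le 1-T$, which is false in general for $x^r,T\in[0,1]$ (take $x^r=0$, $T=0.9$), and your hedge about ``the regime the proposition cares about'' is not an argument. In your framing this is easily repaired --- since your $\lambda$ does not depend on $x^r$, the first bound holds for every $x^r\in[0,1]$, and evaluating it at $x^r=1$ gives \eqref{eq:controllingfreq} verbatim --- so state it that way rather than through the invalid comparison of $|x^r-T|$ with $1-T$.
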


\begin{proof}
    The proof can be found in Appendix \ref{Appendix C}.
\end{proof}

As demonstrated in Proposition \ref{prop3}, the decay rate of the eigenvalues can be modulated by adjusting the frequency of the selected mother wave function, in this case, the Morlet wavelet. This control over the decay rate provides a means to counter spectral bias.

Although this simplified example provides insights, we anticipate that similar behavior will extend to the general architecture of Wav-KANs. However, computing general eigenvalues and eigenfunctions for a broader Wav-KAN configuration is even more complex than for Fourier features. Consequently, empirical verification of these results becomes essential, and we do so by examining specific cases of target functions that exhibit both high- and low-frequency components.

% Examples
\subsection{Controlling NTK eigenvalues via mother wavelet frequency}

We analyze the NTK's eigenvalues for a function with both high- and low-frequency components, which are often challenging for deep neural networks to approximate effectively \cite{WANG2021113938fourierfeaturespINNs}. The target function is given by:

\begin{equation}\label{eq:solpoissoneq}
    u(x)=\sin(2\pi x)+0.1\sin(50\pi x),
\end{equation}
with $x\in [0,1]$. 

To explore this, we initialize a Wav-KAN with two layers and architecture $[1,35,1]$, using the Morlet wavelet $\phi(x) = e^{-\frac{1}{2}x^2} \cos(bx)$ as the mother wavelet. To approximate the target function \eqref{eq:solpoissoneq}, we use 100 data points $x \in [0,1]$ that are spaced, with the corresponding target values $y = u(x)$. The trainable parameters are initialized as $\theta = {W^1, T^1, S^1, W^2, T^2, S^2}$.

We perform NTK eigendecomposition on the Wav-KAN while varying the wavelet frequency parameter $ b $. Specifically, we evaluate frequencies $ b = 1, 5, 10, 15 $, and $ 25 $, as shown in Figures \ref{fig:controllingbyb1} and \ref{fig:controllingb1525}. After training the model for 1000 epochs, we observe in Figure \ref{fig:sinwavefreqb} that when the frequency $ b $ is low (e.g., $ b = 1 $ and $ b = 5 $, represented by the orange and green lines, respectively), the Wav-KAN struggles to approximate the high-frequency components of the target function. However, as $ b $ increases to $ b = 10 $ and $ b = 15 $, represented by the black and red lines, the model demonstrates an improved ability to capture finer details of the target function. This improvement is particularly evident in the zoomed-in frame, where the black line ($ b = 10 $) begins to deviate from the general form to capture additional details. However, the red line ($ b = 15 $) shows the most significant improvement, as the model successfully reproduces the high-frequency details.

In Figure \ref{fig:sinwaveeigenvaluesfreqb}, we observe that the NTK eigenvalues for higher values of \( b \) decay more slowly compared to those for \( b = 1 \) or \( b = 5 \). Notably, it is only at \( b = 15 \) that the eigenvalue decay reaches a point where the model can effectively reproduce the target solution. Additionally, as shown in Figure \ref{fig:sinwavelossfreqb}, the loss function for \( b = 15 \) decreases significantly faster than at lower frequency values, leading to a more accurate approximation of the target function \eqref{eq:solpoissoneq}.

\begin{figure}[ht]
    \centering
    \subfigure[]{
    \includegraphics[scale=0.45]{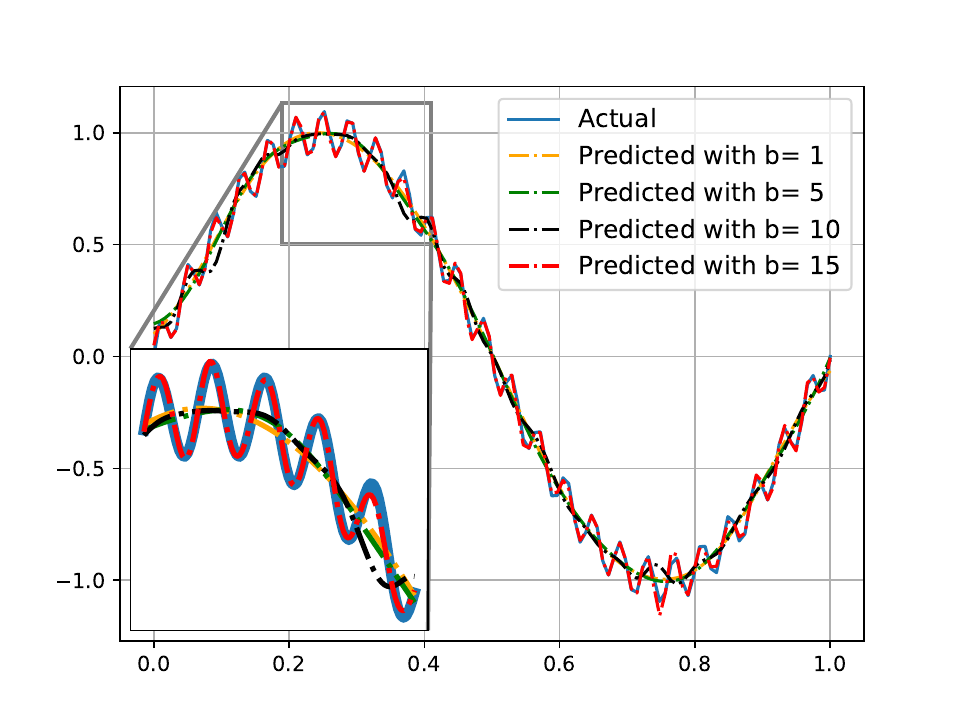}
    \label{fig:sinwavefreqb}
    }
    \subfigure[]{
    \includegraphics[scale=0.45]{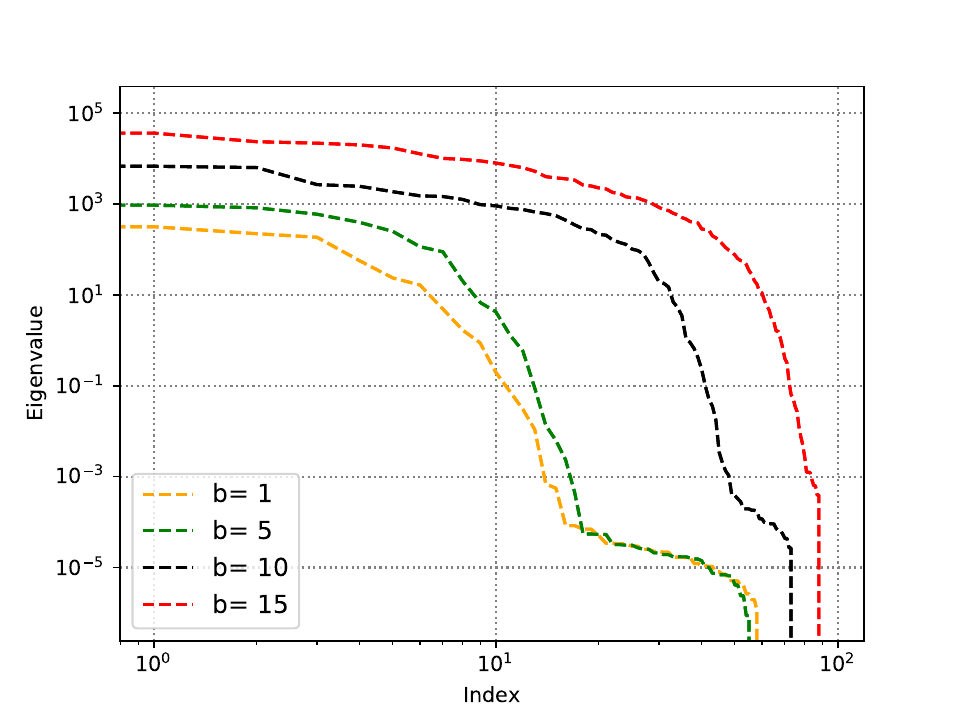}
    \label{fig:sinwaveeigenvaluesfreqb}
    }
        \subfigure[]{
    \includegraphics[scale=0.45]{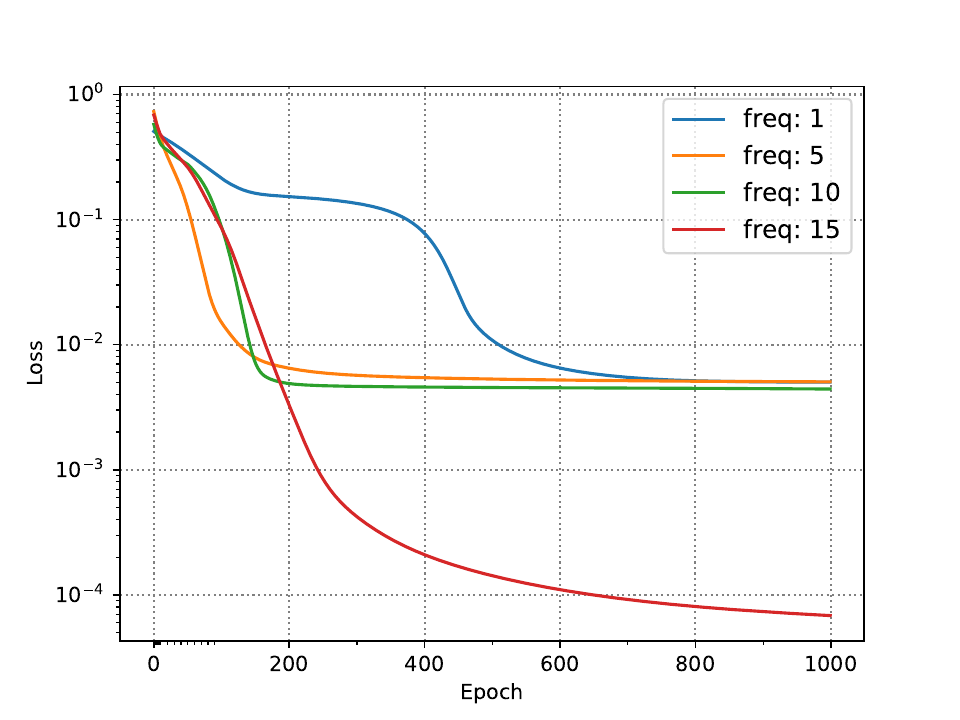}
    \label{fig:sinwavelossfreqb}
    }
    
\caption{Analyzing Wav-KAN behavior through the NTK. a) Approximation of function \eqref{eq:solpoissoneq} with high- and low-frequency components using a two-layer Wav-KAN $[1,35,1]$ and Morlet wavelet $\psi(x) = e^{-x^2}\cos(bx)$ for different $b$ values: $1, 5, 10,$ and $15$. b) NTK eigenvalues in descending order for these values of $b$. c) Loss evolution for varying $b$ values.}
    \label{fig:controllingbyb1}
\end{figure}

On the other hand, if we continue to increase the value of $b$, we observe a notable effect on the NTK's eigenvalues. As shown in Figure \ref{fig:eigenvalues_1525}, the eigenvalues decay more slowly with increasing $b$, which allows the model to learn high-frequency components more quickly. However, this can introduce an undesirable effect on the model's fit, as seen in Figure \ref{fig:controllingb1525}. When $b = 25$, the model displays signs of overfitting to the training dataset (see Figures \ref{fig:sinwaveapproach25} and \ref{fig:losssinwaveapproach25}), suggesting that while high-frequency components are indeed learned faster, excessively high values of $b$ may lead to overfitting.

\begin{figure}[ht]
    \centering
    \includegraphics[scale=0.45]{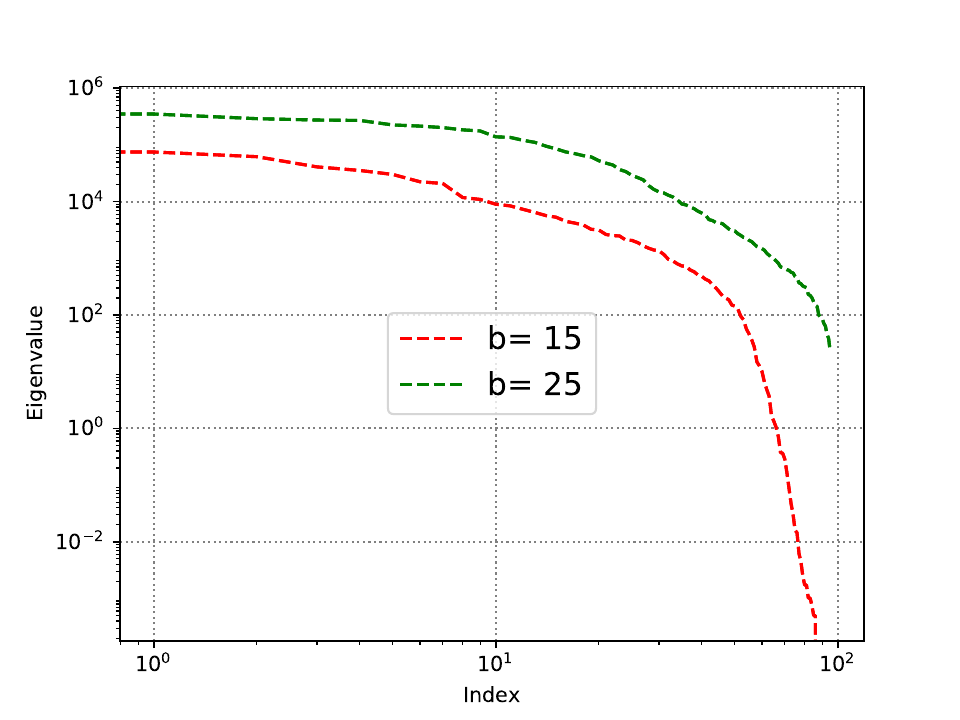}
    \caption{NTK eigenvalues in descending order for $b=15$ and $b=25$.}
    \label{fig:eigenvalues_1525}
\end{figure}

These results show that adjusting the frequency of the mother wavelet offers a means to control the decay rate of the NTK's eigenvalues. By tuning this parameter, we can influence how rapidly high-frequency components are learned, effectively introducing a new hyperparameter. The frequency parameter $b$ can be empirically selected based on learning curves to optimize performance and prevent overfitting. Thus, $b$ is a valuable hyperparameter in balancing the trade-off between learning speed for high frequencies and generalization to avoid overfitting.

\begin{figure}[ht]
    \centering
    \subfigure[]{
    \includegraphics[scale=0.44]{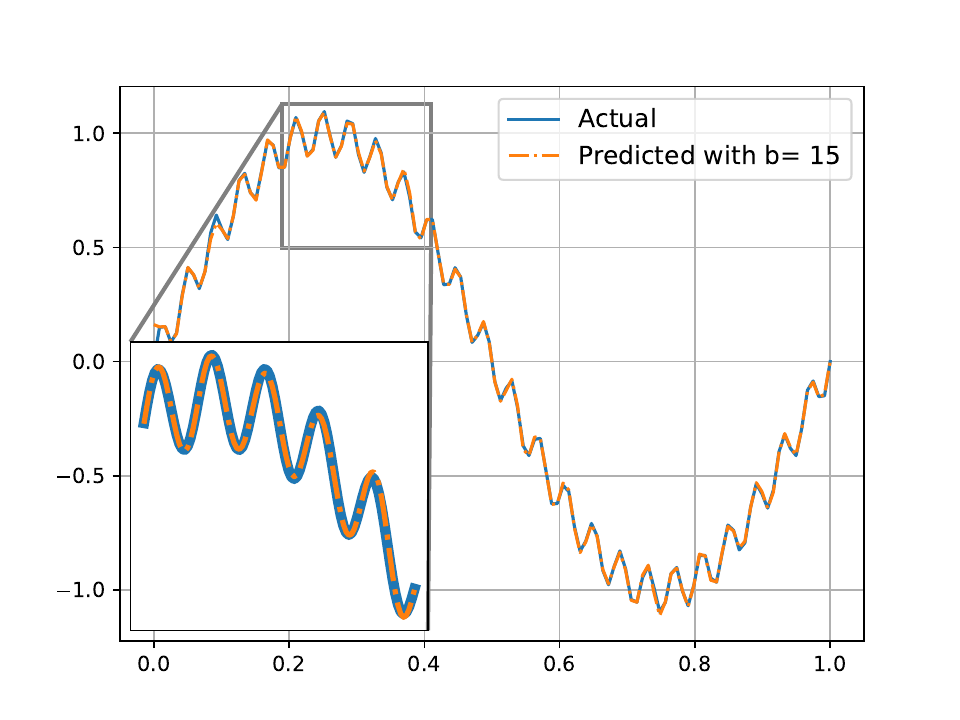}
    \label{fig:sinwaveapproach15}
    }
    \subfigure[]{
    \includegraphics[scale=0.44]{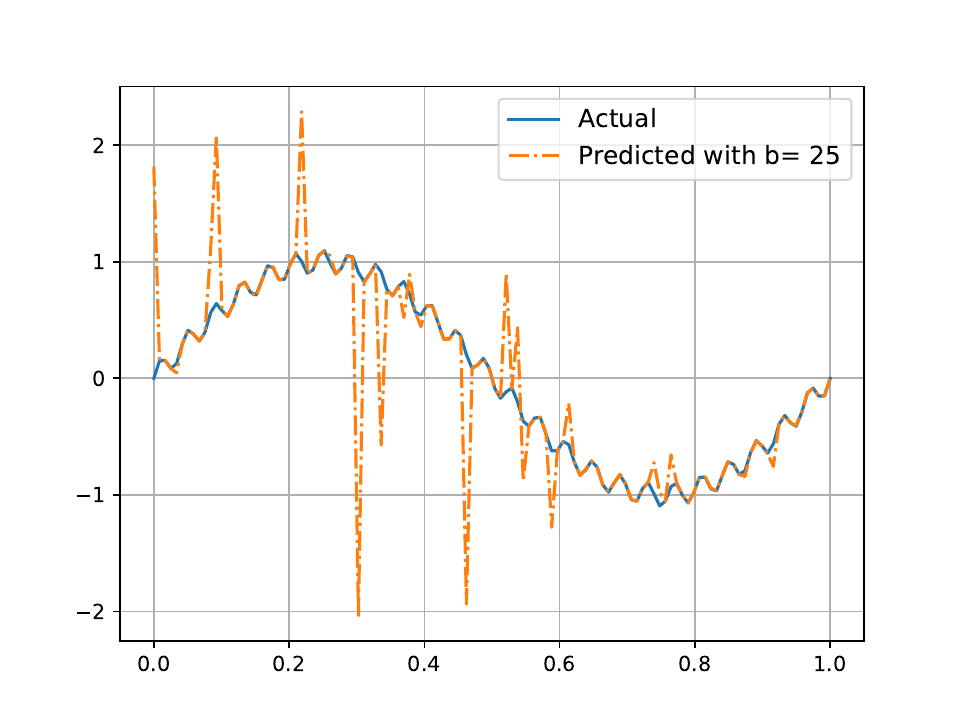}
    \label{fig:sinwaveapproach25}
    }
        \subfigure[]{
    \includegraphics[scale=0.44]{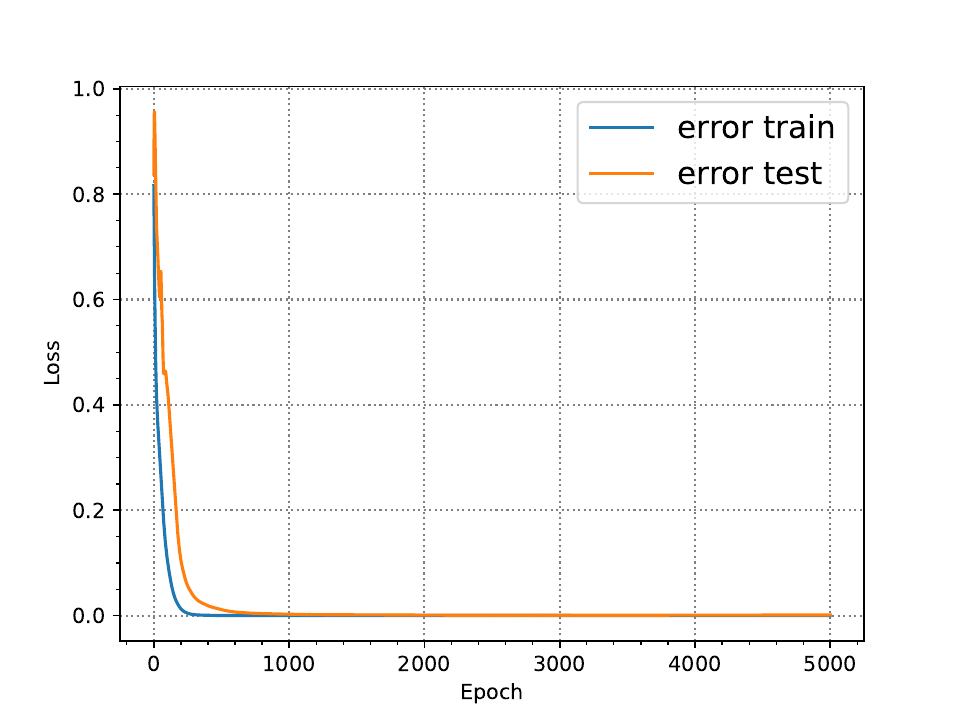}
    \label{fig:loossinwaveapproach15}
    }
        \subfigure[]{
    \includegraphics[scale=0.44]{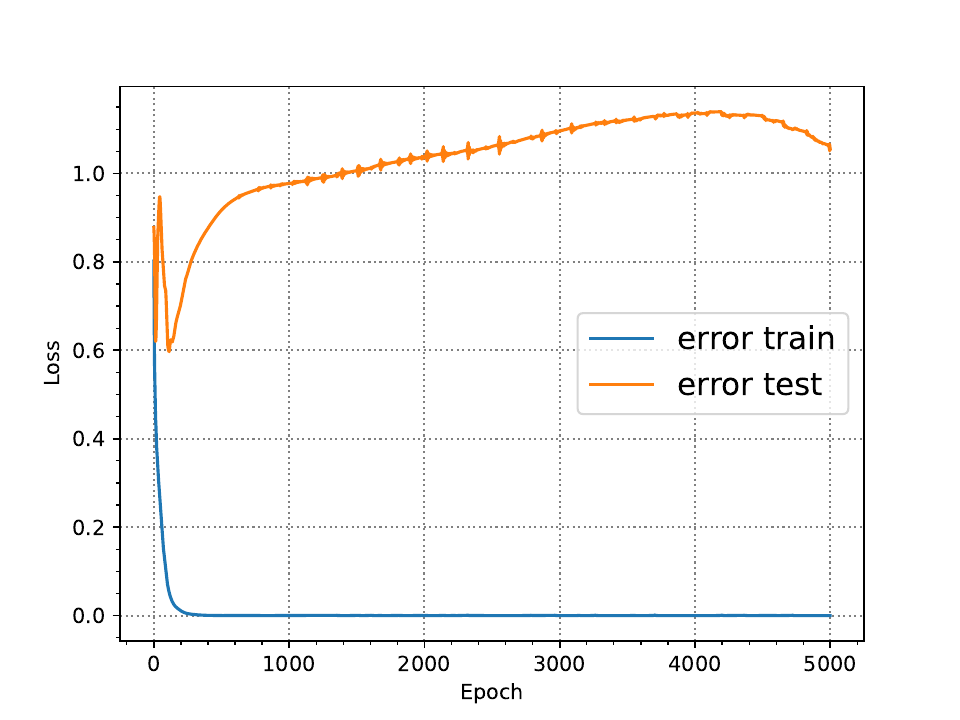}
    \label{fig:losssinwaveapproach25}
    }
    
\caption{a) Function approximation for \eqref{eq:solpoissoneq} with $b=15$. b) Function approximation for \eqref{eq:solpoissoneq} with $b=25$. c) Loss evolution for $b=15$. d) Loss evolution for $b=25$.}
    \label{fig:controllingb1525}
\end{figure}

While Wav-KANs theoretically require more trainable parameters than traditional NNs, as they involve weights, translations, and scalars ($W, T,$ and $S$), in practice, they often need fewer parameters to achieve comparable or superior performance \cite{bozorgasl2024wavkanwaveletkolmogorovarnoldnetworks, wang2024kolmogorovarnoldinformedneural}. For instance, to approximate the function in equation \eqref{eq:solpoissoneq}, we use a Wav-KAN with two layers and 35 hidden units, trained over 1000 epochs using the Adam optimizer with a learning rate of $0.001$ and $b=15$. In contrast, an NN requires four layers, each with 100 hidden neurons, and over 40,000 training epochs to reach a similar level of accuracy \cite{WANG2021113938fourierfeaturespINNs}.

Furthermore, in Proposition \ref{prop3}, assuming a domain of $x \in [0,1]$, we set $S = 1$ and $T \in [0,1]$, which is reasonable given the function's domain. By fixing $S_i^2 = S_j^2 = 1$ for all $i,j$ and setting $T^1_i$ and $T^2_i$ to random values within $[0,1]$, the only trainable parameters are $W^1$ and $W^2$. This reduces the complexity of Wav-KANs to be comparable with that of a standard NN.

In this setup, with fixed parameters $T$ and $S$, and using the Morlet wavelet $\psi(x) = e^{-x^2} \cos(bx)$ with $b=15$, we found that a Wav-KAN with two layers and a configuration of $[1,64,1]$ achieves similar accuracy to that of the Wav-KAN with non-fixed parameters shown in Figure \ref{fig:sinwaveapproach25}. This outcome suggests that fixing translations and scales in Wav-KANs can yield high accuracy while simplifying the model's training requirements. 

On the other hand, in Appendix \ref{Appendix D}, we briefly examine the behavior of the NTK when using different wavelet functions as the mother wavelet. By adjusting the frequency parameters of the mother wavelet, we can effectively control the decay rate of the NTK's eigenvalues. This exploration includes wavelets both with and without explicit frequency parameters, demonstrating that the choice of wavelet significantly impacts the spectrum of the NTK. Our results indicate that manipulating the wavelet's frequency characteristics offers a flexible way to influence the convergence properties of the neural network, thereby enhancing its capacity to capture various features in the data. 

\subsection{Controlling NTK eigenvalues via hidden units}
As previously discussed, the eigenvalues of the NTK can be adjusted by tuning the frequency of the mother wavelet, introducing a new hyperparameter akin to the $\sigma$ parameter in Fourier Features (FF) methods \cite{FourierFeaturesTancik2020, WANG2021113938fourierfeaturespINNs}. However, choosing an inappropriate frequency can lead to issues like overfitting, ultimately affecting the accuracy and generalization of the model. To address this, it’s helpful to analyze the behavior of Wav-KANs with a fixed mother wavelet frequency to avoid the need for extensive hyperparameter tuning.

In this experiment, we configure a Wav-KAN with two layers in the shape $[1, n, 1]$, where we vary the hidden units, $n$, to study its effects. The mother wavelet chosen is the Morlet function, $\psi(x) = e^{-x^2}\cos(5x)$, with a fixed frequency of $b = 5$. Figure \ref{fig:hiddenunits} illustrates that the Wav-KAN better approximates high-frequency components in function \eqref{eq:solpoissoneq} as the number of hidden units increases. This effect is comparable to increasing the frequency $b$ in previous experiments but without the need for modifying frequency directly. Instead, increasing hidden units provides finer control over the function’s high-frequency details.

Moreover, as shown in Figure \ref{fig:eigenvalueshiddenunints}, the NTK eigenvalues decay more gradually with higher numbers of hidden units, suggesting that the Wav-KAN effectively learns high-frequency components without inducing overfitting or other adverse effects. This indicates that by simply adjusting the network’s architecture (i.e., the number of hidden units), Wav-KANs can reduce spectral bias and approximate complex functions. This architecture-based approach to controlling spectral bias highlights the flexibility of Wav-KANs, making them promising models for capturing intricate frequency patterns in data while maintaining robust generalization.  

\begin{figure}
    \centering
    \subfigure[]{
    \includegraphics[scale=0.44]{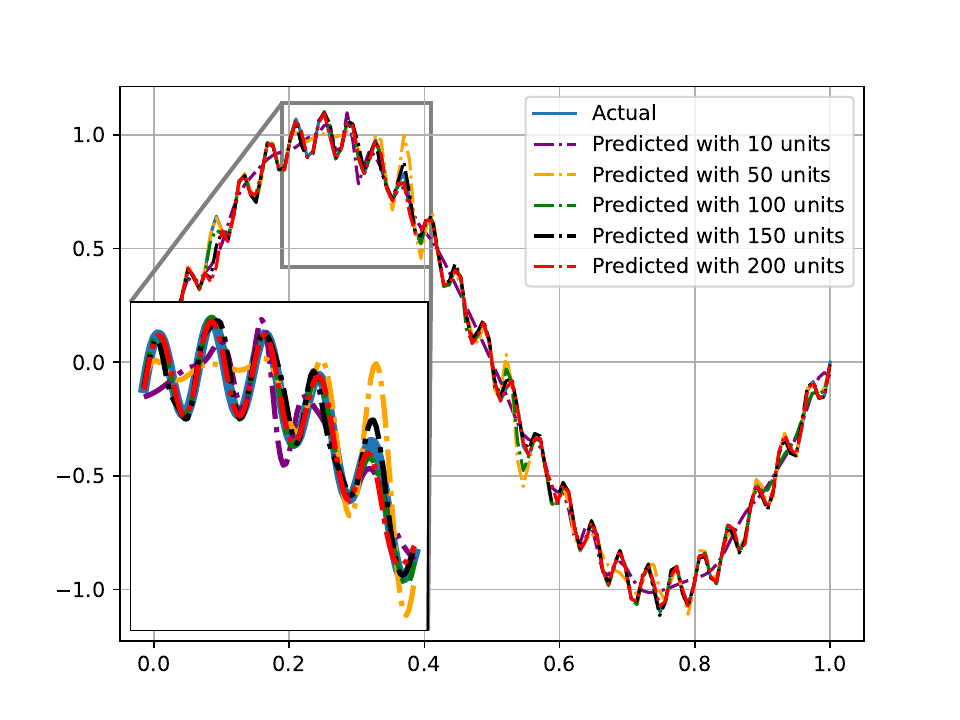}
    \label{fig:sinwavehiddenunits}
    }
    \subfigure[]{
    \includegraphics[scale=0.44]{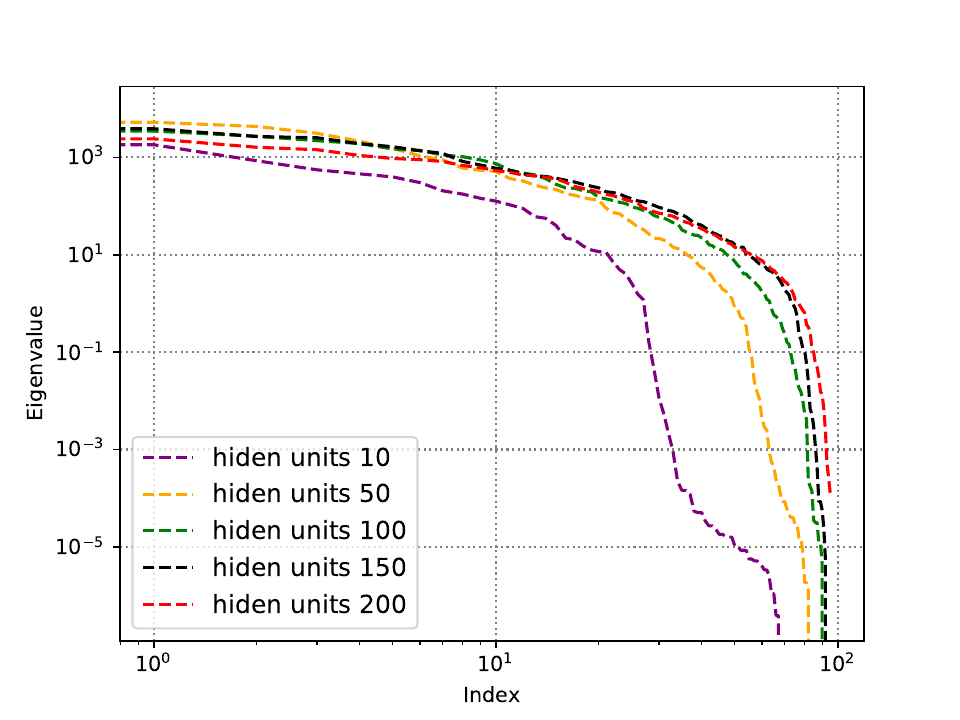}
    \label{fig:eigenvalueshiddenunints}
    }
\caption{Approximation of function \ref{eq:solpoissoneq} using a Wav-KAN with a fixed frequency Morlet mother wavelet. a) Approximation by a two-layer Wav-KAN with structure $[1,n,1]$, where $n=10, 50, 100, 150, 200$, trained for 1000 epochs. b) NTK eigenvalues in descending order for each value of $n$.}
    \label{fig:hiddenunits}
\end{figure}

\section{Extending to Physics-Informed Neural Networks}

Given that Wav-KANs naturally mitigate spectral bias and provide a mechanism to control NTK eigenvalue decay rates, they present a promising model for addressing spectral bias in PINNs, which typically suffer from this issue due to deep neural networks. Thus, replacing deep NNs with Wav-KANs in PINNs can help address spectral bias, leading to the development of models like Wav-KINNs \cite{wang2024kolmogorovarnoldinformedneural} or PIKANs \cite{patra2024physicsinformedkolmogorovarnoldneural}. This approach enables the approximation of solutions to differential equations containing both high- and low-frequency components without adding hyperparameters or partitioning the equation’s domain, as STMFF requires.

We begin with a brief overview of the implementation of Wav-KINNs for approximating solutions to differential equations.

Consider differential equations of the general form:

\begin{equation}
\begin{split}
    \mathcal{N}[u(x)]&=g(x), \text{ }x\in D,\\
    \mathcal{B}[u(x)]&=h(x), \text{ }x\in \partial D,
\end{split}
\end{equation}

\noindent
where $\mathcal{N}$ and $\mathcal{B}$ are the domain operator and boundary operator of the differential equations, respectively (potentially nonlinear), and $D$ and $\partial D$ represent the domain and boundary, respectively.  The solution of the differential equation is denoted by $u(x)$. Let $f(x;\theta)$ be a Wav-KAN with parameters $\theta$. We aim to approximate $u(x)$ using $f(x;\theta)$. For this purpose, we define a loss function $\mathcal{L}_\theta$ as follows:

\begin{equation}
\begin{split}
    \mathcal{L}_{D}&=\frac{1}{N_D} \sum_{i=1}^{N_D} |\mathcal{N}[f(x;\theta)]-g(x)|^2,\\
    \mathcal{L}_{\partial D}&=\frac{1}{N_\partial D} \sum_{i=1}^{N_\partial D} |\mathcal{B}[f(x;\theta)]-h(x)|^2,\\
    \mathcal{L}_\theta&=\mathcal{L}_D+\mathcal{L}_{\partial D}.
\end{split}
\end{equation}
We then optimize this loss function to obtain the parameters of the Wav-KAN that best approximate $u(x)$. This approach will be referred to as Wav-KINNs.

Wav-KINNs have demonstrated exemplary performance in approximating solutions for ordinary, system, and partial differential equations \cite{patra2024physicsinformedkolmogorovarnoldneural}.  However, to our knowledge, unlike Spl-KINNs, their ability to mitigate spectral bias has not yet been explored. This work investigates whether Wav-KINNs can address spectral bias by testing them on the three primary types of partial differential equations (elliptic, parabolic, and hyperbolic), all with high-frequency components, where PINNs typically struggle. We start this study with the 1D Poisson equation, whose solution, as expressed by the function \eqref{eq:solpoissoneq}, has been the focus of our analysis throughout this work.

\begin{equation}\label{eq:poissonequation}
\begin{split}
u_{xx}=-4\pi^2\sin(2\pi x)-& 0.1(50\pi)^2\sin(50\pi x)\text{ }x\in [0,1],\\
u(0)&=u(1)=0.\\
\end{split}
\end{equation}

We implemented a Wav-KINN with two layers in the architecture $[1,64,1]$, using the Morlet mother wavelet defined by $\psi(x)=e^{-x^2}cos(bx)$ with $b=10$. The model was trained with the Adam optimizer, using a learning rate of $0.001$  for 10,000 epochs. For training, we sampled 100 equally spaced points within the domain $[0,1]$ and used the two boundary points to apply Dirichlet conditions, giving $N_D=100$ and $N_{\partial D}=2$.  The results, shown in Figure \ref{fig:Poissonequation}, illustrate that Wav-KINNs effectively capture both high- and low-frequency components of the Poisson equation solution. Moreover, this approach achieves comparable accuracy with fewer parameters than the method based on MFF \cite{WANG2021113938fourierfeaturespINNs}.
\begin{figure}[ht]
    \centering
    \subfigure[]{
    \includegraphics[scale=0.45]{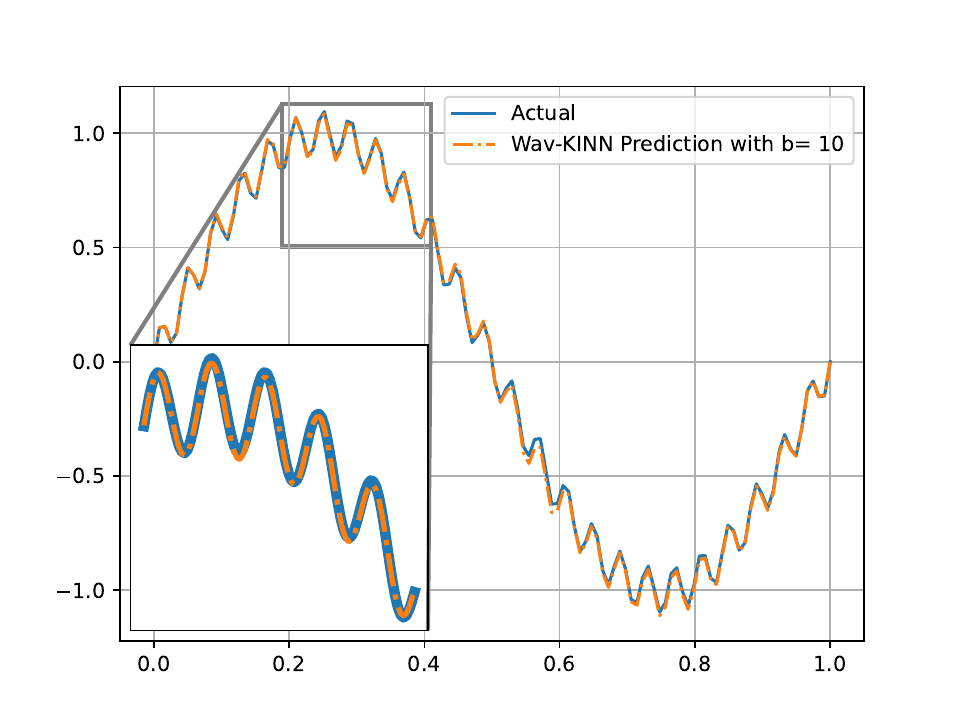}
    \label{fig:apporchingpoissonequationWav-KINN}
    }
    \subfigure[]{
    \includegraphics[scale=0.45]{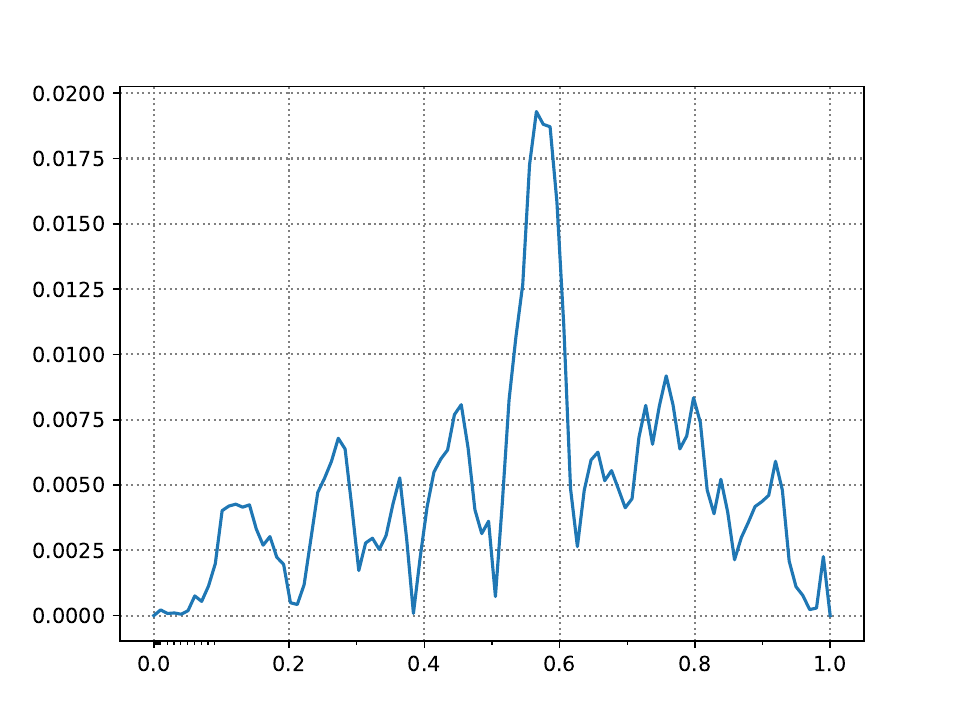}
    \label{fig:poissonequationerror}
    }
\caption{Approximation of the Poisson equation with high-frequency components using a Wav-KINN. a) Solution approximation by the Wav-KINN with two layers, shaped $[1,64,1]$, using the Morlet mother wavelet $\psi(x) = e^{-x^2} \cos(bx)$ with $b=10$. b) Absolute error between the Wav-KINN approximation and the actual solution.}
    \label{fig:Poissonequation}
\end{figure}

Thus, Wav-KINNs suggest they are less prone to spectral bias. To further test this, we now apply a more challenging equation: the time-dependent 1D Heat equation, given by:

\begin{equation}\label{eq:Heatequation}
    \begin{split}
u_t&=\frac{1}{(50\pi)^2} u_{xx}\text{ } x\in[0,1], \text{ }t\in [0,1],\\
u(0,t)&=u(1,t)=0,\\
u(x,0)&=\sin(50\pi x),
\end{split}
\end{equation}
whose solution is $u(x,t)=e^{-t}\sin(50\pi x)$.

To approximate the solution of this equation, we used a Wav-KINN with three layers in the configuration $[2,15,15,1]$, using the Morlet mother wavelet defined by $\psi(x) = e^{-x^2} \cos(bx)$ with $b=5$. The model was trained with the LBFGS optimizer for 10,000 epochs. Training data included $32 \times 32$ randomly selected points within the domain $[0,1]^2$, 200 boundary points for applying Dirichlet conditions, and 100 points for the initial conditions, resulting in $N_D = 1024$, $N_{bc} = 200$, and $N_{ic} = 100$. In this approach, boundary and initial conditions are treated separately, so the loss function $\mathcal{L}\theta$ contains three terms instead of two: $\mathcal{L}_D$ for the domain operator, $\mathcal{L}{bc}$ for boundary conditions, and $\mathcal{L}{ic}$ for initial conditions.

The results in Figure \ref{fig:Heatequation} illustrate that Wav-KINNs can successfully approximate both high- and low-frequency components in this more complex equation. In contrast, alternative methods to address spectral bias, such as those proposed in \cite{WANG2021113938fourierfeaturespINNs}, may require separating spatial and temporal domains and constructing STMFF to achieve similar performance with high spatial frequency components, thereby introducing additional hyperparameters. Wav-KINNs, however, achieve accurate solutions with fewer parameters and without partitioning domains, simplifying the model setup and parameter selection.

\begin{figure}[ht]
    \centering
    \subfigure[]{
    \includegraphics[scale=0.6]{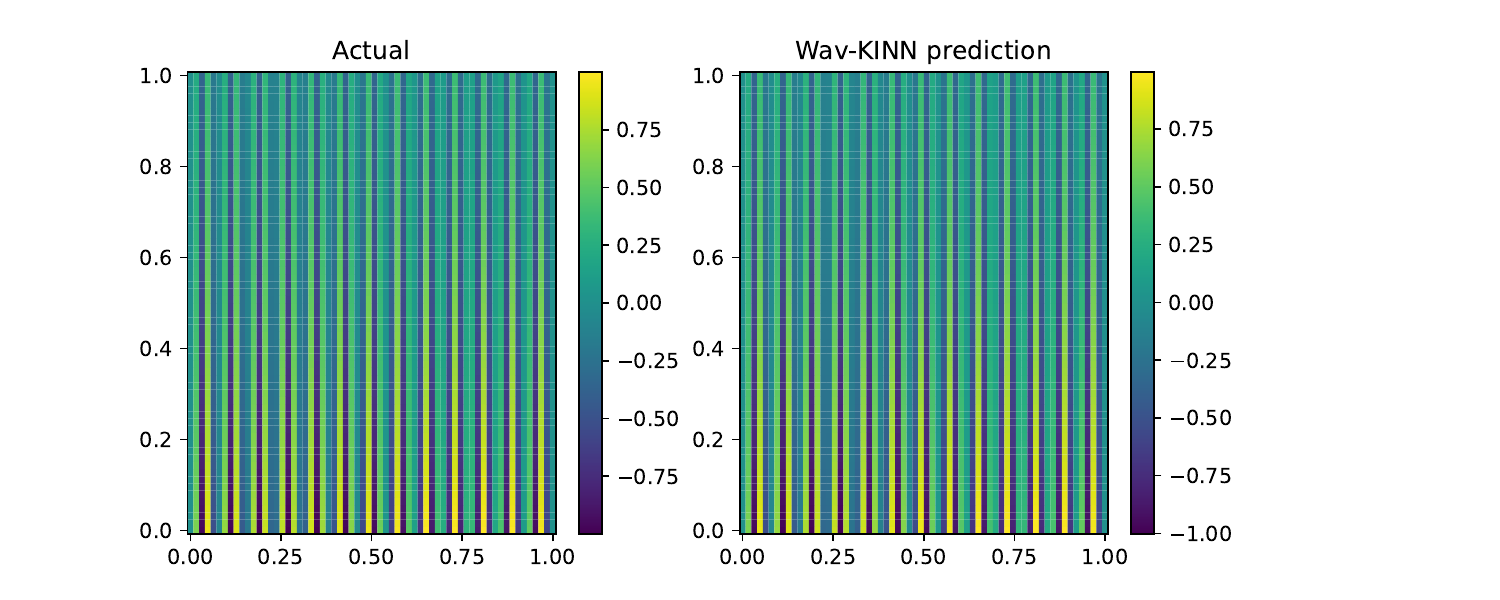}
    \label{fig:approachheatequation}
    }
    \subfigure[]{
    \includegraphics[scale=0.35]{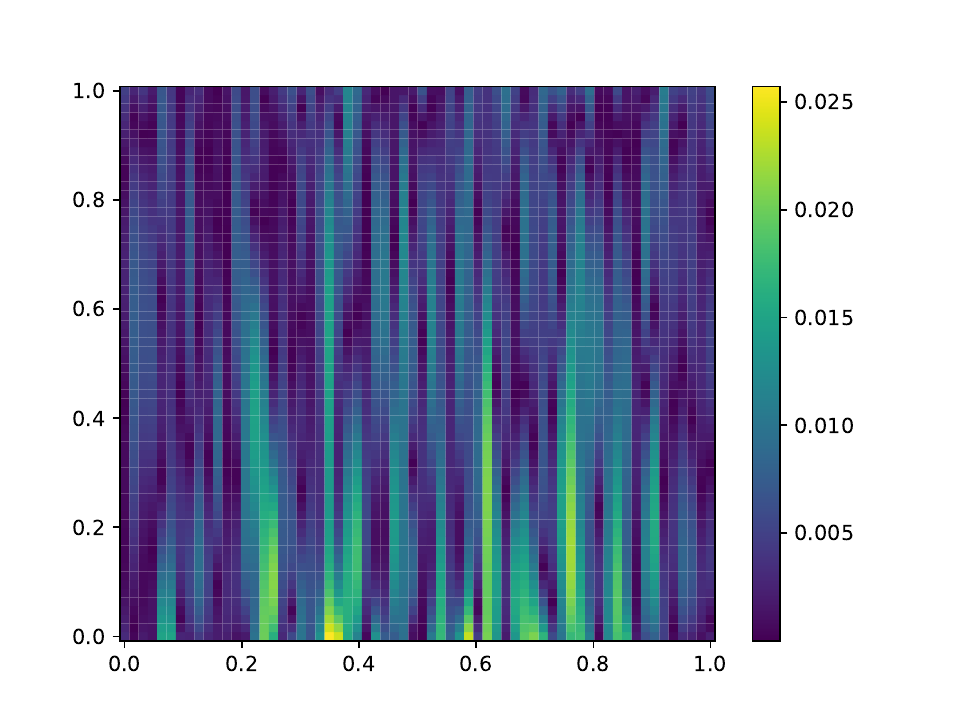}
    \label{fig:errorHeatequation}
    }
\caption{Approximation of the Heat equation with high-frequency components using a Wav-KINN. a) Solution approximation by the Wav-KINN with three layers, shaped $[1,15,15,1]$, using the Morlet mother wavelet $\psi(x) = e^{-x^2} \cos(bx)$ with $b=5$. b) Absolute error between the Wav-KINN approximation and the actual solution.}
    \label{fig:Heatequation}
\end{figure}

To further study how Wav-KINNs handle spectral bias, we test them with the Helmholtz equation:

\begin{equation}
\begin{split}
u_{xx}+u_{yy} + k^2u &= q(x, y),\\
u(-1,y)=u(1,y)&=u(x,-1)=0,u(x,1)=0.\\
\end{split}
\end{equation}

The exact solution is:

$u(x, y) =\sin(a_1\pi x) \sin(a_2\pi y),$ $$q(x,y)= -(a_1\pi )^2\sin(a_1\pi x) \sin(a_2\pi y)-(a_2\pi )^2\sin(a_1\pi x) sin(a_2\pi y) + k^2 \sin(a_1\pi x)\sin(a_2\pi y),$$ $a_1=1$ and $a_2=20$.

Using a Wav-KINN with four layers in the configuration $[2,15,15,15,1]$, we trained the model on $50 \times 50$ randomly selected points within the domain $[-1,1]^2$ and $200$ boundary points for Dirichlet conditions, resulting in $N_D = 2500$, $N_{\partial D} = 200$. The Morlet mother wavelet was defined by $\psi(x) = e^{-x^2} \cos(bx)$ with $b=5$, and the model was trained with the LBFGS optimizer for 10,000 epochs. 

The results are displayed in \ref{fig:Helmotzequation}. However, unlike the previous cases, the Wav-KINN struggled to accurately approximate the solution to the Helmholtz equation, even when adjusting the mother wavelet frequency, as seen in Figures \ref{fig:wrongHelmotzeqation} and \ref{fig:wrongHelmotzequationerror}. This performance issue may be due to an imbalance between the loss function terms 
$\mathcal{L}_D$ and $\mathcal{L}_{\partial D}$, where the Wav-KINN optimizes one term faster than the other, resulting in a suboptimal solution. This imbalance could be analyzed through the NTK of Wav-KINNs, following the methods in \cite{WANG2021113938fourierfeaturespINNs} and \cite{cao2020understandingspectralbiasdeep}. However, calculating the NTK for Wav-KINNs in this context is even more complex than for Wav-KANs, which is already computationally intensive. Therefore, we opted for an empirical approach, observing the training behavior of Wav-KINNs across epochs. 

To address the imbalance, we introduce weights $\lambda_D$ and $\lambda_{\partial D}$ in the loss function to balance the terms, such that:

\begin{equation}
    \mathcal{L}_\theta =\lambda_D \mathcal{L}_D + \lambda_{\partial D} \mathcal{L}_{\partial D}.
\end{equation}

By carefully choosing $\lambda_D$ and $\lambda_{\partial D}$ 
to balance the terms, we were able to obtain a more accurate approximation to the Helmholtz equation solution with Wav-KINNs, as shown in Figures \ref{fig:goodHelmotzequation} and \ref{fig:gooHelmotzequationerror}. This suggests that appropriate weighting in the loss function can help Wav-KINNs handle complex equations without excessive computational adjustments. 

\begin{figure}[ht]
    \centering
    \subfigure[]{
    \includegraphics[scale=0.6]{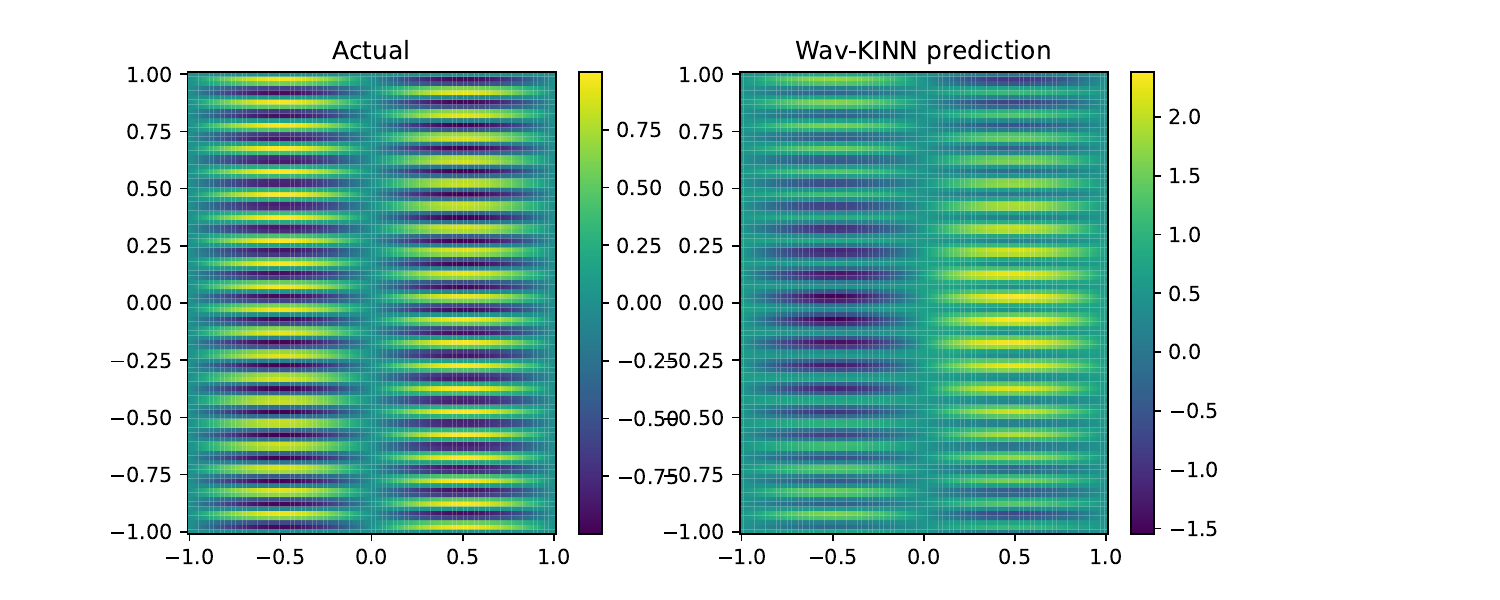}
    \label{fig:wrongHelmotzeqation}
    }
    \subfigure[]{
    \includegraphics[scale=0.35]{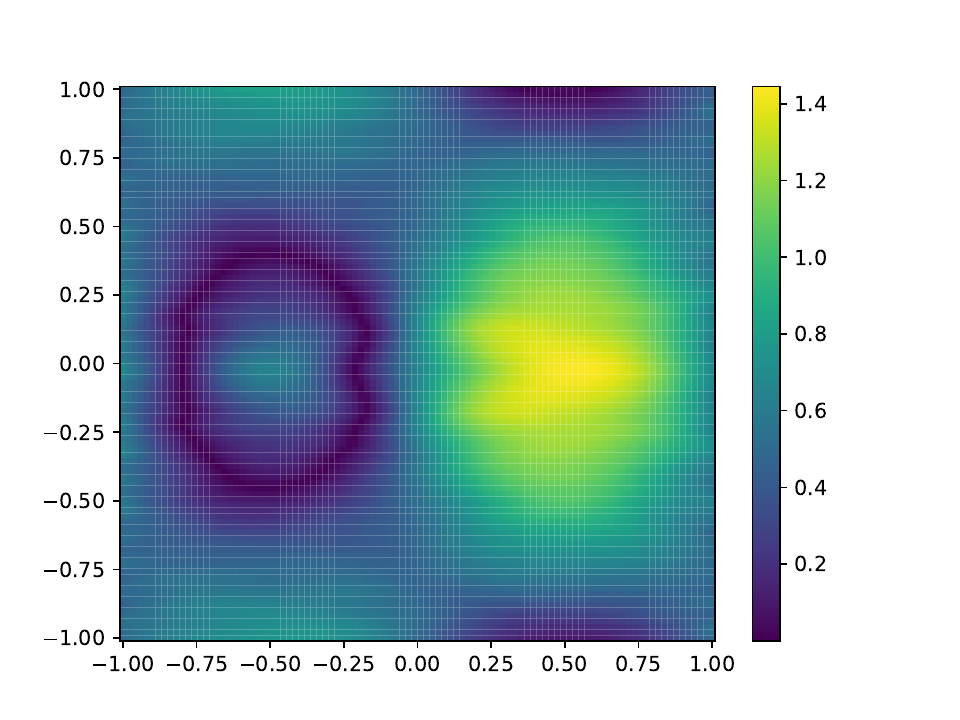}
    \label{fig:wrongHelmotzequationerror}
    }
    \subfigure[]{
    \includegraphics[scale=0.6]{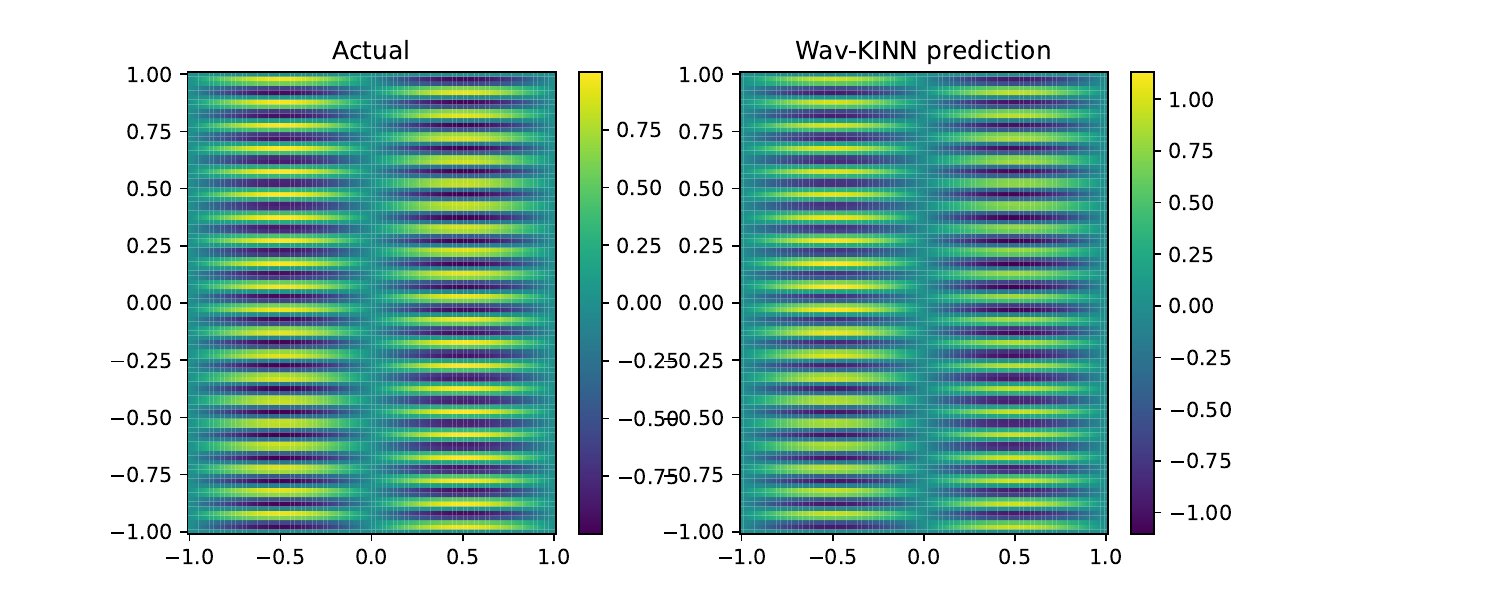}
    \label{fig:goodHelmotzequation}
    }
    \subfigure[]{
    \includegraphics[scale=0.35]{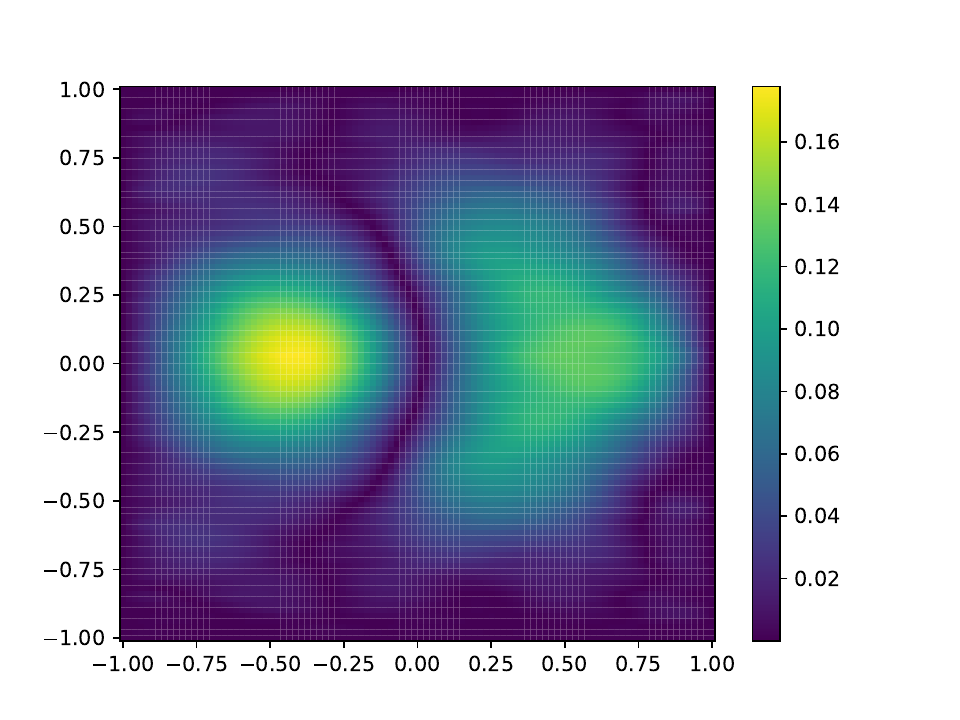}
    \label{fig:gooHelmotzequationerror}
    }
\caption{a) Solution approximation of the Helmholtz equation with high-frequency components using a Wav-KINN with four layers, structured as $[2,15,15,15,1]$, and the Morlet mother wavelet $\psi(x) = e^{-x^2} \cos(bx)$ with $b=5$, without balancing the loss terms $\mathcal{L}_D$ and $\mathcal{L}_{\partial D}$. b) Absolute error between the Wav-KINN approximation and the solution without balanced loss terms. c) Solution approximation of the Helmholtz equation using Wav-KINNs with the same parameters as in a), but with balanced loss terms, applying weights $\lambda_D$ and $\lambda_{\partial D}$. d) Absolute error between the Wav-KINN approximation and the solution with balanced loss terms.}
    \label{fig:Helmotzequation}
\end{figure}

To complete our empirical analysis, we tested the hyperbolic wave equation given by:

\begin{equation}
\begin{split}
u_{tt}&=4 u_{xx}\\
u(0,t)&=u(1,t)=0\\
u(x,0)&=\sin(πx) + 0.5\sin(4\pi x),\\
u_t(x,0)&=0.
\end{split}
\end{equation}
\noindent
where the exact solution is:

$u(x,t) = \sin(πx) \cos(2πt) + 0.5
\sin(4πx) \cos(8πt).$

This experiment used a Wav-KINN with four layers structured as $[2,20,20,20,1]$. The model was trained on $32 \times 32$ randomly selected points within the domain $[0,1]^2$, with 200 boundary points for Dirichlet conditions, 200 points for the initial condition, and 200 points for the Neumann condition, resulting in $N_D = 2500$, $N_{bc} = 200$, $N_{ic} = 200$, and $N_{nbc} = 200$. Each condition was handled separately, leading to a loss function with four terms: $\mathcal{L}_D$, $\mathcal{L}_{bc}$, $\mathcal{L}_{ic}$, and $\mathcal{L}_{nbc}$. The Morlet mother wavelet was defined by $\psi(x) = e^{-x^2} \cos(bx)$ with $b=5$, and training was conducted using the LBFGS optimizer for 10,000 epochs. 

Similar to previous results, Wav-KINNs initially struggled to approximate the solution to the wave equation, as shown in Figures \ref{fig:waveequationwrong} and \ref{fig:waveequationwrongerror}. However, by balancing the terms in the loss function with weights $\lambda_{D}$, $\lambda_{bc}$, $\lambda_{ic}$, and $\lambda_{nbc}$, Wav-KINNs were able to approximate the solution more accurately, as demonstrated in Figures \ref{fig:waveeqautiongood} and \ref{fig:waveeqatuinogooderror}.

\begin{figure}[ht]
    \centering
    \subfigure[]{
    \includegraphics[scale=0.6]{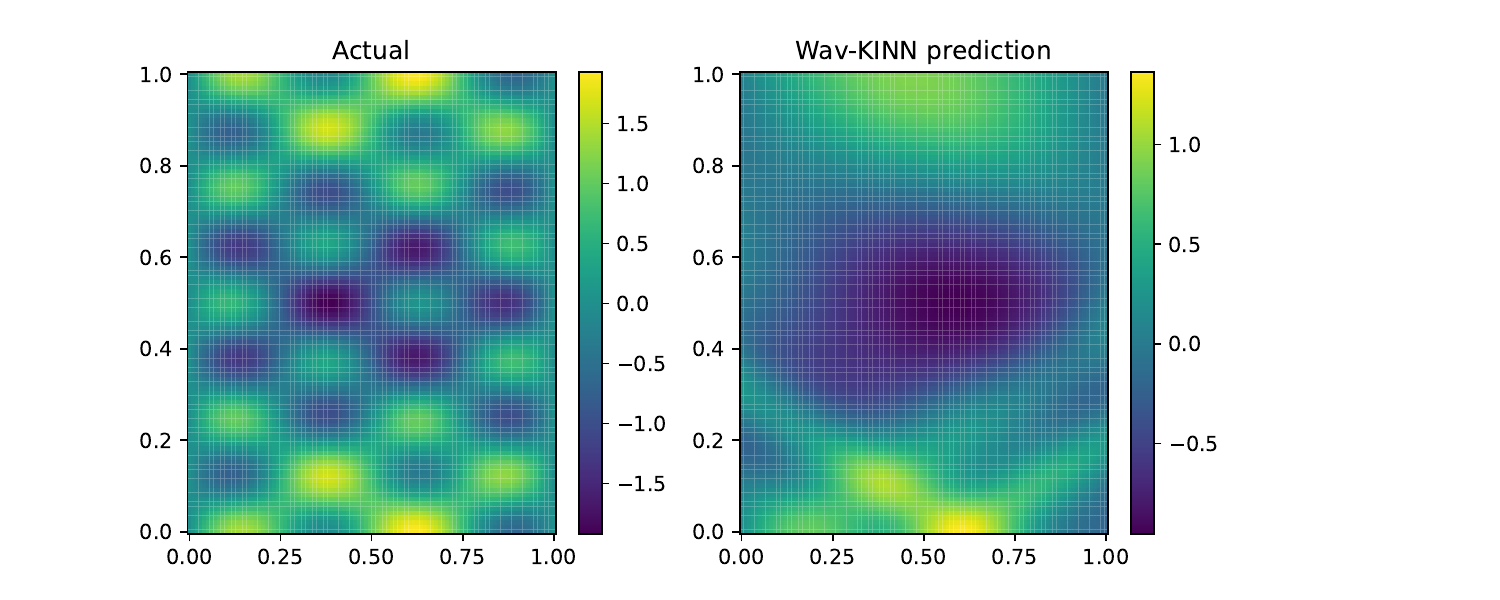}
    \label{fig:waveequationwrong}
    }
    \subfigure[]{
    \includegraphics[scale=0.35]{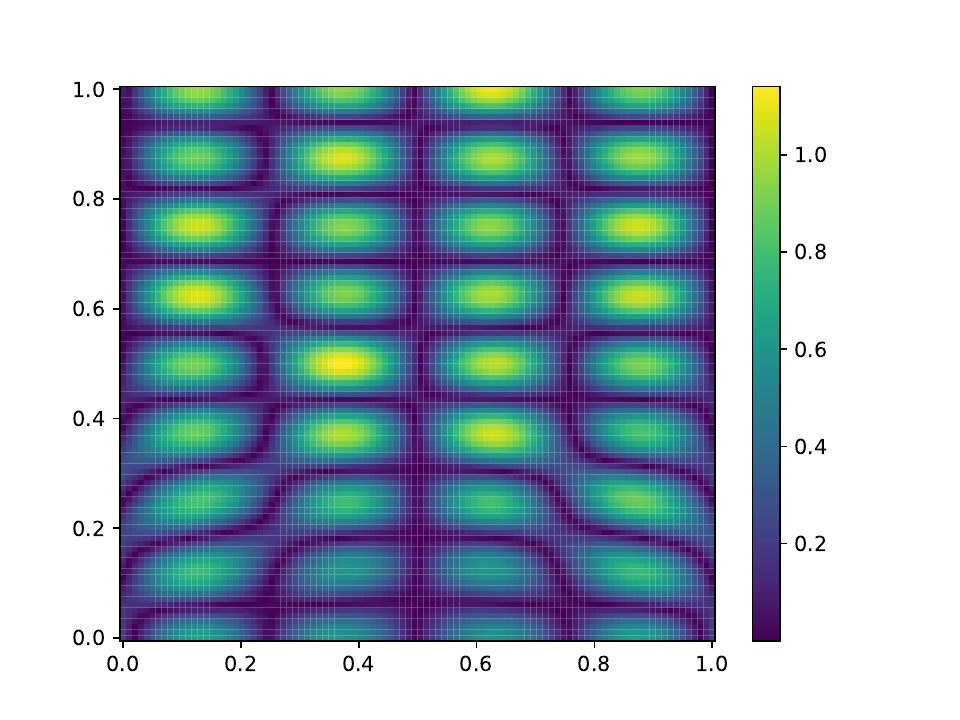}
    \label{fig:waveequationwrongerror}
    }
    \subfigure[]{
    \includegraphics[scale=0.6]{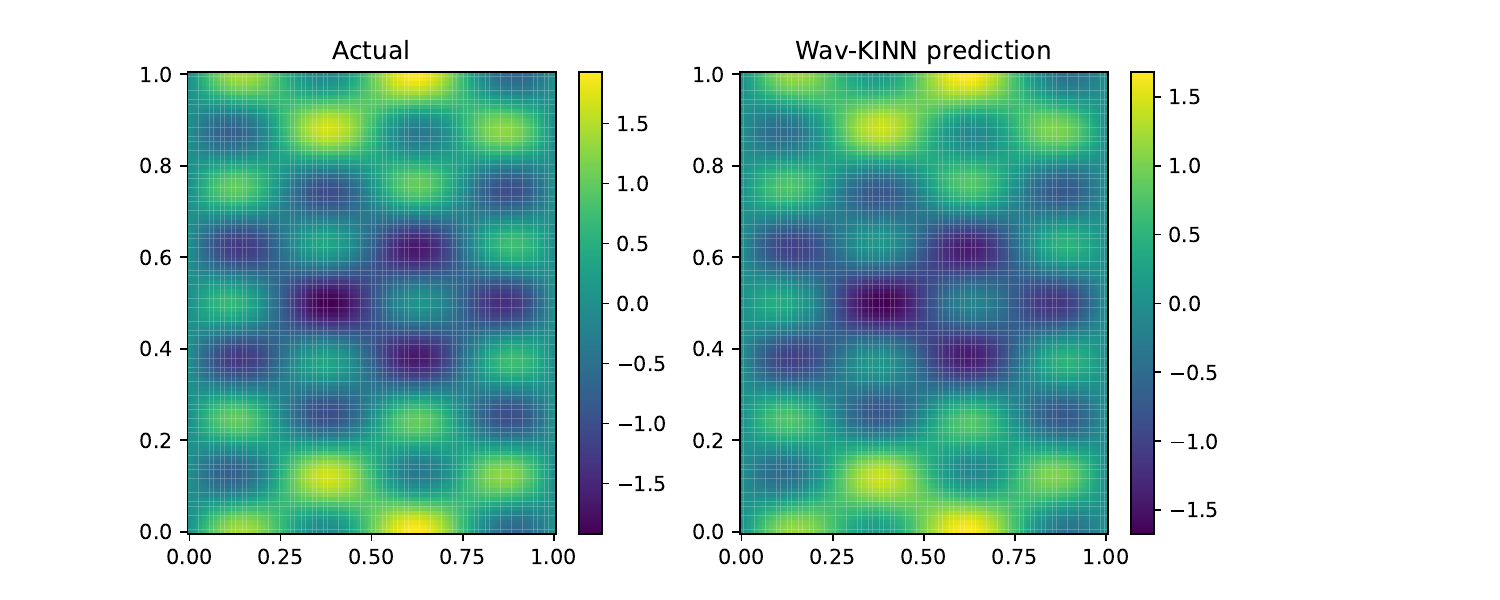}
    \label{fig:waveeqautiongood}
    }
    \subfigure[]{
    \includegraphics[scale=0.35]{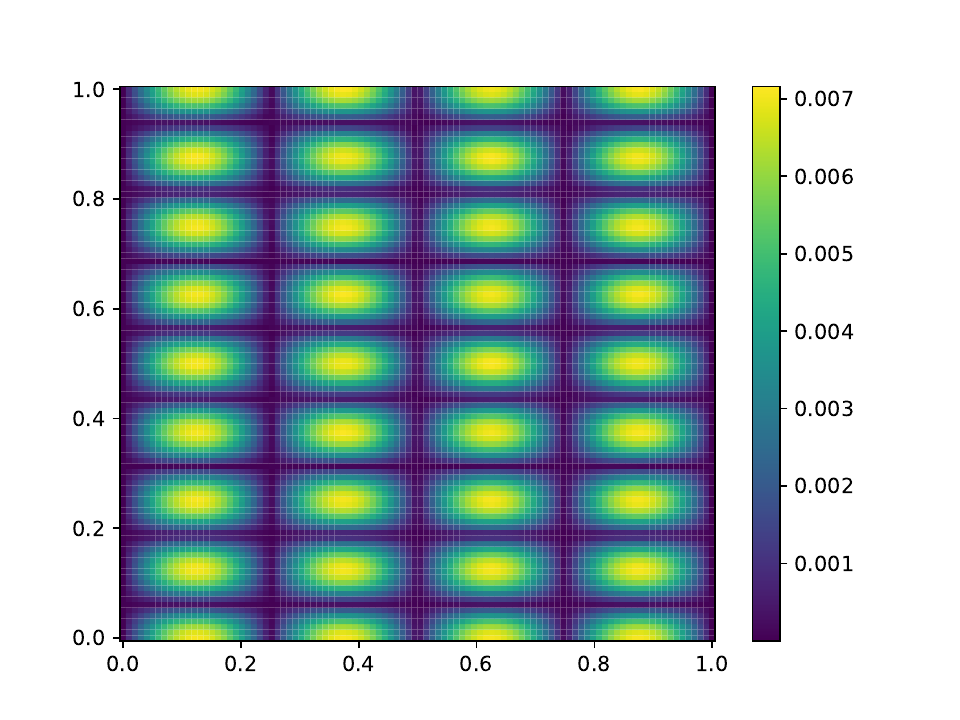}
    \label{fig:waveeqatuinogooderror}
    }
\caption{a) Solution approximation of the Wave equation with high-frequency components using a Wav-KINN with four layers, structured as $[2,20,20,20,1]$, and the Morlet mother wavelet $\psi(x) = e^{-x^2} \cos(bx)$ with $b=5$, without balancing the loss terms. b) Absolute error between the Wav-KINN approximation and the solution without balanced loss terms. c) Solution approximation of the Wave equation using Wav-KINNs with the same parameters as in a), but with balanced loss terms, applying weights $\lambda_{D}$, $\lambda_{bc}$, $\lambda_{ic}$, and $\lambda_{nbc}$. d) Absolute error between the Wav-KINN approximation and the solution with balanced loss terms.}
    \label{fig:waveequation}
\end{figure}

To conclude, by testing Wav-KINNs on the three main types of equations with high-frequency components, we demonstrate that Wav-KINNs effectively mitigate spectral bias without the need to separate domains or introduce additional artifacts.

\section{Discussion}

We have analyzed the behavior of the NTK eigenvalues for Wav-KANs and demonstrated that they can be effectively controlled by adjusting the frequency of the mother wavelet, thereby avoiding spectral bias. We have shown empirically that Wav-KANs do not exhibit spectral bias even with an unequal increase in hidden units. Although fewer parameters are required for Wav-KANs to approximate functions compared to traditional deep NNs, we observed that setting fixed translation and scaling parameters with $T\in [0,1]$ and $S=1$ allows Wav-KANs to perform well while minimizing the parameter count. However, a more detailed study is essential to understand the extent to which these parameters can be fixed or varied, as well as the boundaries for extending the parameters and assumptions from Proposition \ref{prop3}. This exploration is of significant interest, as it could provide insights into optimal parameter selection in Wav-KANs, potentially reducing computational costs and enhancing model efficiency.
% Is very difficult analyze spectral bias for Wav_KINNs 
On the other hand, implementing Wav-KANs within PINNs, i.e., forming Wav-KINNs, has demonstrated strong performance in approximating solutions for differential equations with high-frequency components, where classical PINNs typically struggle. However, examining spectral bias in Wav-KINNs theoretically through the lens of NTK analysis is complex, necessitating further in-depth study. A deeper investigation of spectral bias in Wav-KINNs could provide valuable insights into their ability to approximate functions and their derivatives without bias, reinforcing their application to a broader range of high-frequency differential equation problems. This understanding is key to optimizing Wav-KINNs for complex systems where high-frequency accuracy is critical. 

% Balancing terms in WavKINNs

As shown in the Helmholtz and Wave equation approximations, while Wav-KINNs can handle high-frequency components, they may encounter challenges with unbalanced terms in the loss function. For computational efficiency, manually adjusting weights can help balance these terms; however, further research is needed to determine optimal strategies for selecting these hyperparameters. Additionally, exploring adaptive weight adjustments at each learning step could enhance accuracy while preserving efficiency and avoiding increases in computational cost. Developing such strategies would be valuable for improving Wav-KINNs in applications that demand balanced accuracy across all terms.

Furthermore, while Wav-KINNs have demonstrated robust performance in approximating functions within rectangular and simple domains, their application to more complex geometries remains underexplored. Similar issues to those encountered by Spl-KINNs \cite{wang2024kolmogorovarnoldinformedneural}, such as difficulties in modeling intricate boundaries and shapes, may arise. Accurately capturing complex geometries and boundaries is crucial for simulations in advanced scientific and engineering contexts, emphasizing the need for future studies to address these limitations.

In light of these considerations, it is interesting to examine the adaptability of Wav-KINNs to high-dimensional implementations. High-dimensional problems frequently involve complexities associated with the curse of dimensionality, which can influence the training and convergence of neural networks. Investigating how Wav-KINNs might address these challenges—potentially through techniques such as dimensionality reduction or adaptive sampling strategies—could provide valuable insights. A more thorough investigation into these topics could facilitate a more comprehensive understanding of the potential and limitations of Wav-KINNs and their applicability to a broader range of problems in scientific computing and engineering disciplines.

\section{Conclusion}
In this study, we investigated the potential of Wav-KANs in approximating functions and physics-informed settings. We mainly focused on their capacity to mitigate spectral bias, a prevalent limitation in traditional neural networks when handling high-frequency components in approximation functions and solutions to differential equations. By leveraging the controlled frequency properties of mother wavelets and implementing Wav-KANs in PINNs, a variant called Wav-KINNs was developed and applied to a series of partial differential equations (PDEs), including the Poisson, Heat, Wave, and Helmholtz equations.

The results indicate that Wav-KINNs exhibit robust performance in approximating solutions across various PDE types. In contrast to standard PINNs, which frequently encounter difficulties in handling high-frequency components and necessitate the introduction of additional hyperparameters or domain decomposition (as in the STMFF approach), Wav-KINNs are capable of achieving accurate solutions with a reduced number of parameters and without the necessity of separating spatial and temporal domains. This reduction in complexity represents a notable advantage, demonstrating that Wav-KINNs can effectively address spectral bias through an elegant approach that retains computational efficiency.

However, as suggested by the results of the Helmholtz and Wave equation tests, Wav-KINNs, like other PINN architectures, may encounter difficulties when terms in the loss function become imbalanced, particularly between domain and boundary conditions. This issue is a challenge in the broader PINN framework. Addressing this imbalance is essential for future research, as it could help stabilize the training process and improve overall performance. While analyzing this imbalance through the NTK could provide valuable insights, the analytical computation of the NTK in wavelet-based architectures like Wav-KINNs remains computationally prohibitive, highlighting another avenue for further exploration. 

In summary, Wav-KANs and the  Wav-KINNs present a promising model for addressing spectral bias in function approximation and PINNs, enabling a stable and adaptable approach to solving high-frequency, intricate differential equations. Further research could concentrate on developing theoretical analyses of the NTK for wavelet-based networks, refining loss-balancing strategies, and extending the applications of Wav-KINNs in real-world physics-informed problems.
\newpage

\nocite{*}
\bibliography{references}

\newpage
\appendix

\section{Proof Proposition 1} \label{Appendix A}
\begin{proof}
    Since $K(x^r,x^s)=\sum_{i=1}^n \psi_i(x_i^r)\psi_i(x_i^s)$ by equation \eqref{eq:kerneldotproduct},   replacing this expression into the integral equation \eqref{eq:integralequation} gives:
    \begin{equation}
        \begin{split}
            \int_C \sum_{i=1}^n \psi_i(x_i^r)\psi_i(x_i^s) g(x^s) dx^s &= \lambda g(x^r)\\
            \sum_{i=1}^n \int_C  \psi_i(x_i^r)\psi_i(x_i^s) g(x^s) dx^s &= \lambda g(x^r)
        \end{split}
    \end{equation}
Taking derivatives with respect to $x_k^r$ on both sides of the equation, we get:
\begin{equation}
    \begin{split}
        \sum_{i=1}^n \frac{\partial   \psi_i}{\partial x_k^r}(x_i^r) \int_C\psi_i(x_i^s) g(x^s) dx^s &= \lambda \frac{\partial g }{\partial x_k^r }(x^r)\\
        \frac{1}{S_k}\frac{d   \psi_k}{d x_k^r}(x_k^r) \int_C\psi_k(x_k^s) g(x^s) dx^s &= \lambda \frac{\partial g }{\partial x_k^r }(x^r)\\
    \end{split}
\end{equation}
Now, if there exists a function $\psi_k$ such that $\frac{d\psi_k}{dx_k}(x_k)=\omega_k(x_k)\psi_k(x_k)$,  we first note that $\omega_{k}(x_k)$ is a univariate function that cannot be constant. This is because if $\omega_{k}$ were a constant, then for a linear ordinary differential equation:
\begin{equation}
\begin{split}
    \psi_k(x_k)=C_1e^{\omega_k x_k},
\end{split}
\end{equation}
but no constant $\omega_k$  exists such that $\psi_k$ which is a translation and scaling of the mother wavelet $\psi$ retains the properties of the mother wavelet. Thus, $\omega_k(x_k)$ cannot be constant.

Substituting the expression for \( \frac{d \psi_k}{dx_k} \), we get:
\begin{equation}
    \frac{\omega_k(x_k^r)}{S_k} \int_C \psi_k(x_k^r) \psi_k(x_k^s) g(x^s) \, dx^s = \lambda \frac{\partial g}{\partial x_k^r}(x^r).
\end{equation}
Now, let 
\[    h_k(x_k^r) = \begin{cases}        \frac{S_k}{\omega_k(x_k^r)} & \text{if } \omega_k(x_k^r) \neq 0, \\        0 & \text{otherwise}.    \end{cases}\]
Then, we can rewrite the previous equation as:
\begin{equation}
    \int_C \psi_k(x_k^r) \psi_k(x_k^s) g(x^s) \, dx^s = \lambda h_k(x_k^r) \frac{\partial g}{\partial x_k^r}(x^r).
\end{equation}
Summing over all partial derivatives gives:
\begin{equation}
    \sum_{i=1}^n \int_C \psi_i(x_i^r) \psi_i(x_i^s) g(x^s) \, dx^s = \lambda \sum_{i=1}^n h_i(x_i^r) \frac{\partial g}{\partial x_i^r}(x^r).
\end{equation}
Hence, we have:
\begin{equation}
    \lambda g(x^r) = \lambda \langle \omega_0(x^r), \nabla g(x^r) \rangle,
\end{equation}
where \( \omega_0(x^r) = (h_1(x_1^r), h_2(x_2^r), \dots, h_n(x_n^r))^T \).
\end{proof}
\section{Proof Proposition 2}\label{Appendix B}
\begin{proof}
    Suppose that $g(x^r)$ is the corresponding eigenfunction of the operator $K $ with \( K(x^r, x^s) = \psi_1(x^r) \psi_1(x^s) \). Then, by Proposition \ref{pro1}, we have:
    \begin{equation}
        g(x^r) = h_1(x^r) \frac{d g}{dx^r},
    \end{equation}
    where \( h_1(x^r) \) is as described in Proposition \ref{pro1}. Integrating this equation, we get:
    \begin{equation}
        g(x^r) = C_2 e^{\frac{1}{S} \int \omega_1(x^r) \, dx^r},
    \end{equation}
    where \( \omega_1(x^r) \) is a function that satisfies:
    \begin{equation}
        \frac{d \psi_1}{dx^r}(x^r) = \omega_1(x^r) \psi_1(x^r).
    \end{equation}
    Therefore, solving the differential equation for \( \psi_1(x) \) gives:
    \begin{equation}
        \psi_1(x^r) = C_1 e^{\int \omega_1(x^r) \, dx^r}.
    \end{equation}
    Substituting this result back into the expression for \( g(x^r) \), we get:
    \begin{equation}
        g(x^r) = C_3 \psi_1^{\frac{1}{S}}(x^r),
    \end{equation}
    where \( C_3 \) is a constant.
\end{proof}

\section{Proof Proposition 3}\label{Appendix C}
\begin{proof}
    By Proposition \ref{prop2}, substituting \( g(x^r) \) into the integral equation \eqref{eq:integralequation}, we obtain:
    \begin{equation}
        \lambda = \int_C \psi_1^{S+1}(x^r) \psi_1^{\frac{S+1}{S}}(x^s) \, dx^s,
    \end{equation}
    for a compact domain \( C \).

    On the other hand, if we choose:
    \begin{equation}
        \omega_1(x^r) = \frac{-bS \tan\left(b \frac{x^r-T}{S}\right) - (x^r-T)}{S^2},
    \end{equation}
    and since \( \frac{d \psi_1}{dx^r}(x^r) = \omega_1(x^r) \psi_1(x^r) \), solving the differential equation and substituting \( \omega_1 \), we get:
    \begin{equation}
        \psi_1(x^r) = e^{-\frac{(x^r-T)^2}{2S^2}} \cos\left(b \frac{x^r-T}{S}\right),
    \end{equation}
    which is a translation and scaling of \( \psi(x) = e^{-x^2/2} \cos(bx) \), commonly known as the Morlet Wavelet. Thus, choosing the compact domain \( C = [0,1] \) and substituting into equation (36):
    \begin{equation}
        \lambda = e^{-\frac{(x^r-T)^2 (S+1)}{2S^2}} \cos^{S+1}\left(b \frac{x^r-T}{S}\right) \int_0^1 e^{-\frac{(x^s - T)^2 (S+1)}{2S^3}} \cos^{\frac{S+1}{S}}\left(b \frac{x^s - T}{S}\right) \, dx^s.
    \end{equation}
    
    Assuming that \( x \in [0,1] \), we can consider a small translation \( T \) within the same interval, i.e., \( T \in [0,1] \). Let us assume that the weights can adapt the mother function \( \psi \) and set \( S = 1 \). Therefore, since \( \cos^2(b(x - T)) \) provides a controlling factor, \( \lambda \) is bounded from below by:
    \begin{equation}
        \frac{1}{4} e^{-4b^2 (x^r-T)^2} \leq \lambda.
    \end{equation}
\end{proof}
\section{Study of different mother wavelets }\label{Appendix D}
In our study, we primarily used the Morlet as the mother wavelet, following the assumptions outlined in Proposition \ref{pro1}. However, we also briefly explored other wavelet functions, presented below, to verify that controlling the frequency in the mother wavelet allows us to control the decay rate of the NTK eigenvalues in Wav-KANs. These include wavelets without an explicit frequency parameter, such as the Mexican Hat Wavelet and the Derivative of the Gaussian Wavelet, which are commonly used alternatives.

\subsection*{Shannon Wavelet}

We begin with the real Shannon wavelet, explicitly defined in the frequency domain. The Shannon wavelet is designed to be localized in the frequency space, making it practical for analyzing specific frequency bands. In the time domain, the real Shannon wavelet is related to the $sinc$ function (the inverse Fourier transform of a rectangular window in the frequency domain):

\begin{equation} \psi(x) = \frac{\sin(\omega_1 x) - \sin(\omega_2 x)}{\pi x}. \end{equation}

This function exhibits oscillations and decay, capturing a frequency range determined by $\omega_1$ and $\omega_2$. To simplify control over the frequency parameter, we specifically use octave bands, where:

\begin{equation} \omega_1 = 2 \omega_2. \end{equation}

We then analyze the behavior of the NTK eigenvalues for different values of $\omega_1$ when approximating the function in equation \eqref{eq:solpoissoneq}, using the real Shannon wavelet as the mother wavelet. As shown in Figure \ref{fig:1D}, similar to the results obtained with the Morlet wavelet, increasing the parameter $\omega_1$ leads to a slower decay of the eigenvalues. This confirms that adjusting the frequency parameter allows us to effectively influence the behavior of the NTK eigenvalues.

\begin{figure}
    \centering
    \subfigure[]{
    \includegraphics[scale=0.45]{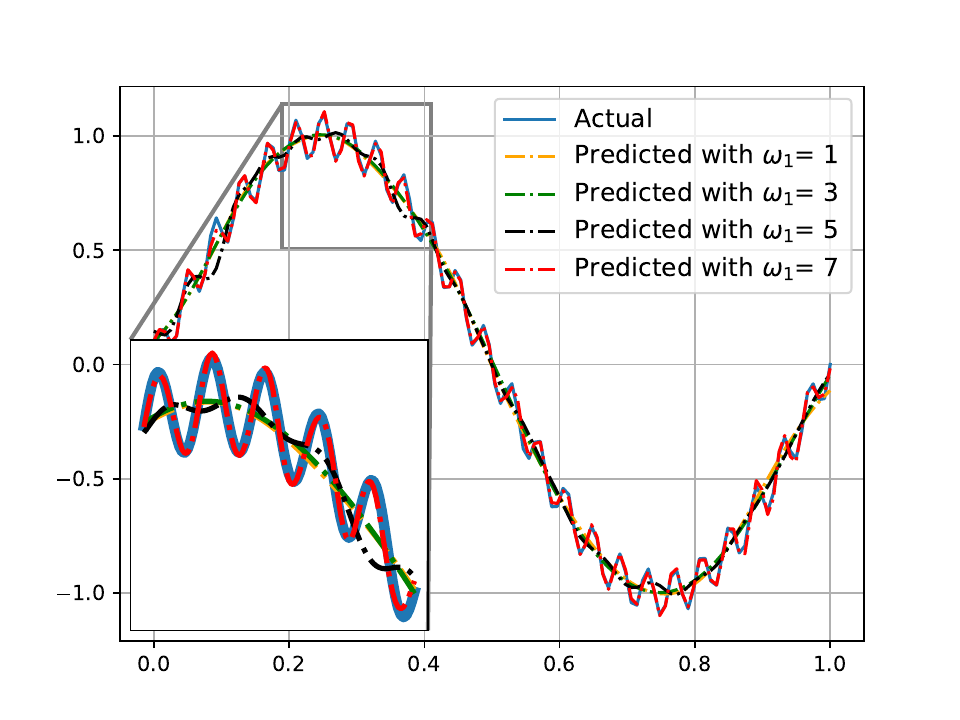}
    \label{fig:1Da}
    }
    \subfigure[]{
    \includegraphics[scale=0.45]{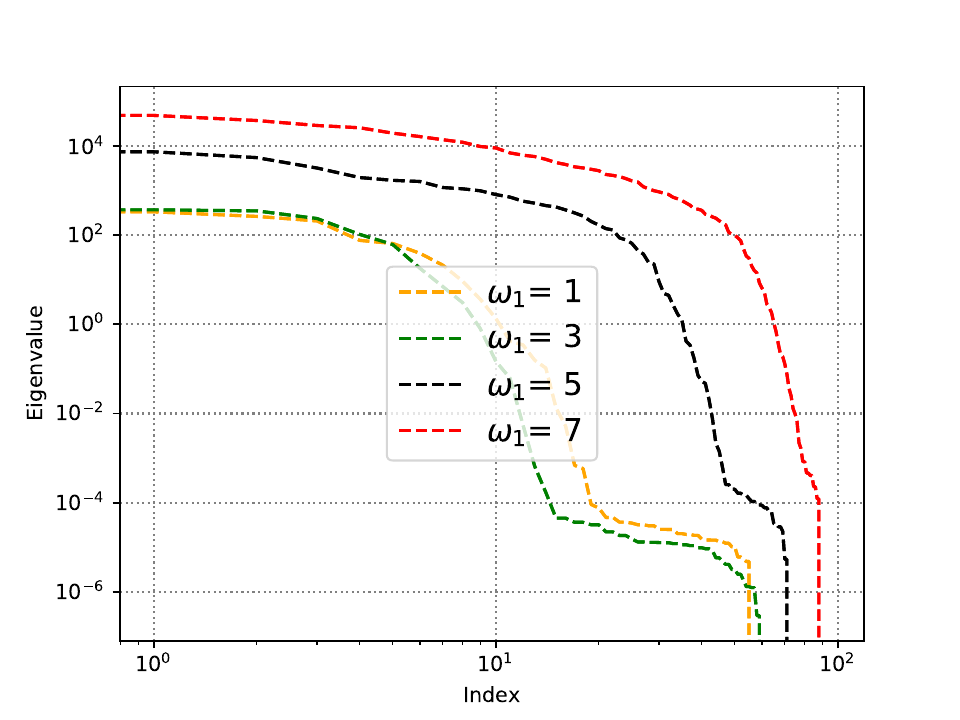}
    \label{fig:1Db}
    }
\caption{ a) Approximation of function \eqref{eq:solpoissoneq} using a Wav-KAN with two layers [1,35,1] and the real Shannon Wavelet as mother wavelet for different $\omega_1$ values: $1, 3, 5$ y $7$, trained for 1000 epochs. b) NTK eigenvalues in descending order for each value of $\omega_1$.}
    \label{fig:1D}
\end{figure}

\subsection*{Mexican hat wavelet}

On the other hand, the Mexican Hat wavelet given by:

\begin{equation}
    {\displaystyle \psi (x)={\frac {2}{{\sqrt {3\sigma }}\pi ^{1/4}}}\left(1-\left({\frac {x}{\sigma }}\right)^{2}\right)e^{-{\frac {x^{2}}{2\sigma ^{2}}}}},
\end{equation}
does not have an explicit frequency parameter, unlike the Morlet and Shannon wavelets. However, the Mexican Hat wavelet is widely used in wave analysis, and in many practical applications, the scale parameter $\sigma$ is adjusted to capture different frequency ranges. A larger value of $\sigma$ results in a broader wavelet, which captures lower frequencies, while a smaller $\sigma$ produces a narrower wavelet, targeting higher frequencies.

As shown in Figure \ref{fig:2D}, by varying the values of $\sigma$, we can control the rate at which the eigenvalues decrease to approximate the solution of equation \eqref{eq:solpoissoneq}. 

\begin{figure}
    \centering
    \subfigure[]{
    \includegraphics[scale=0.45]{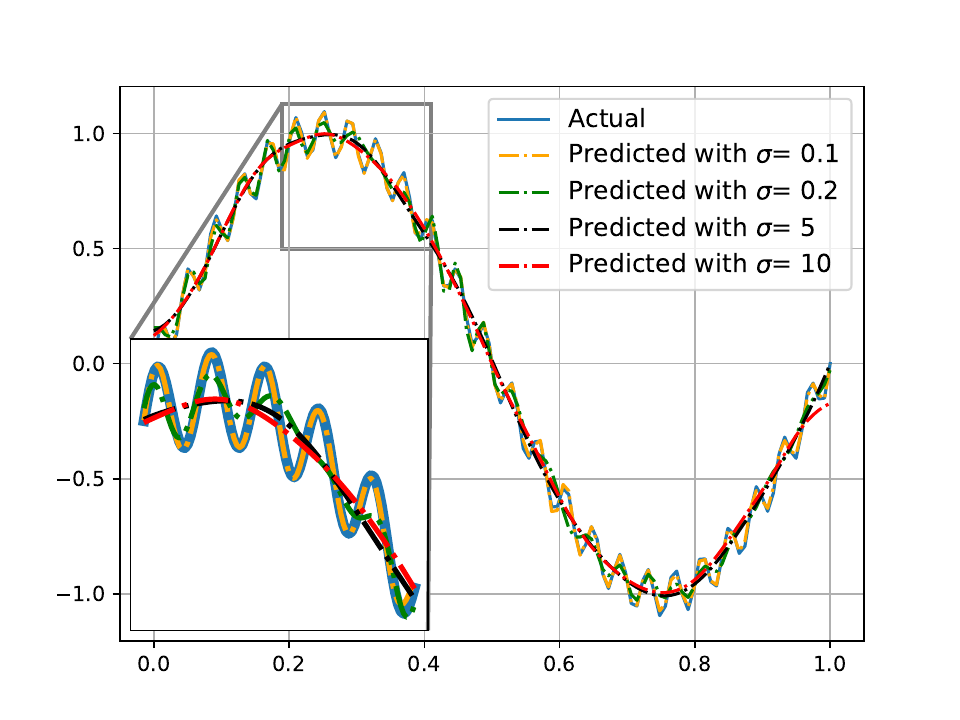}
    \label{fig:a2D}
    }
    \subfigure[]{
    \includegraphics[scale=0.45]{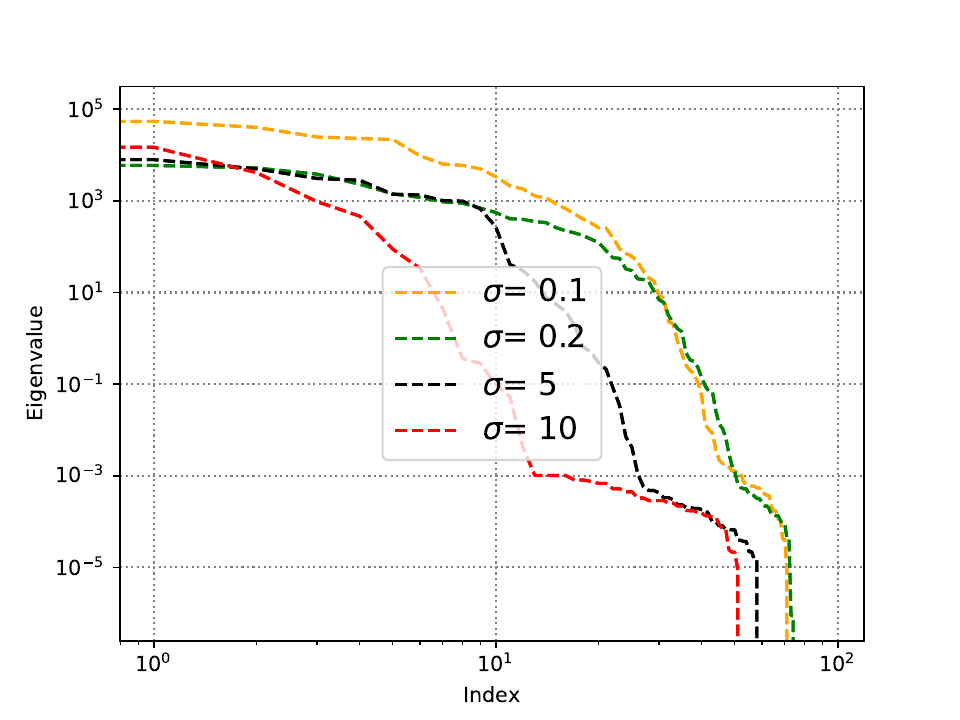}
    \label{fig:b2D}
    }
\caption{a) Approximation of function \eqref{eq:solpoissoneq} using a Wav-KAN with two layers [1,35,1] and the Mexican Hat Wavelet as the mother wavelet for different $\sigma$ values: $0.1, 0.2, 5$ y $10$, trained for 1000 epochs. b) NTK eigenvalues in descending order for each value of $\sigma$.}
    \label{fig:2D}
\end{figure}

\subsection*{Derivative of Gaussian wavelet}
Similar to the Mexican Hat wavelet, the Derivative of the Gaussian (DoG) wavelet given by:
\begin{equation}
    -\frac{x}{\sigma^2}e^{-{\frac {x^{2}}{2\sigma ^{2}}}},
\end{equation}
does not have an explicit frequency parameter. However, a larger $\sigma$ results in a broader Gaussian, causing the derivative to have a slower transition, which captures lower frequency content. Conversely, a smaller $\sigma$ produces a narrower Gaussian, leading to sharper peaks in the derivative and capturing higher frequency content.

As shown in Figure \ref{fig:3D}, by adjusting the values of $\sigma$, we can control the rate at which the eigenvalues of the NTK decrease.

\begin{figure}
    \centering
    \subfigure[]{
    \includegraphics[scale=0.45]{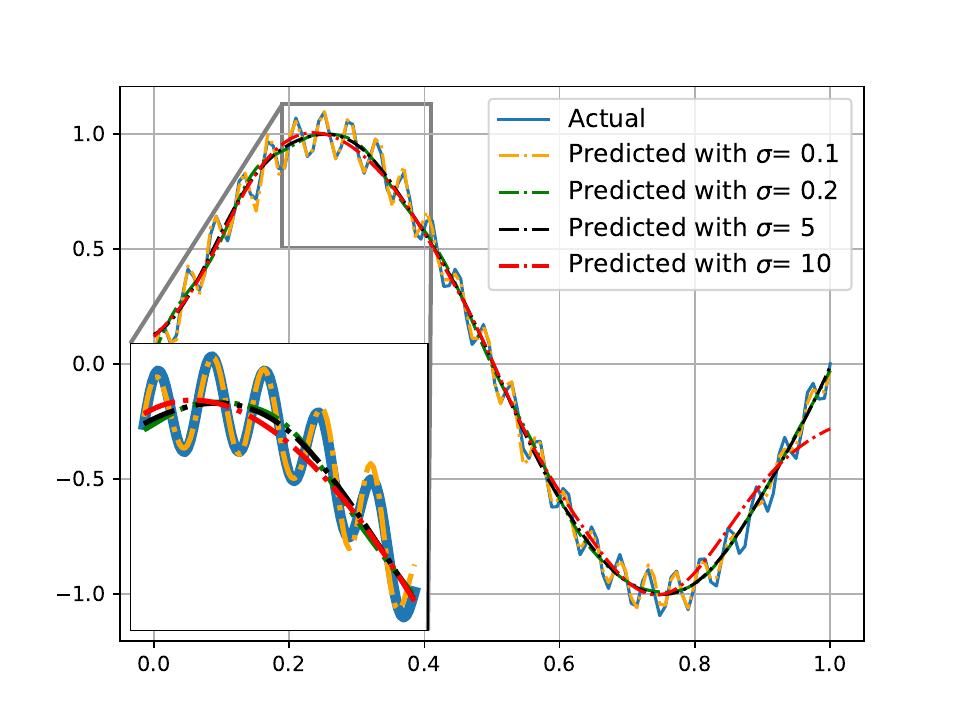}
    \label{fig:a3D}
    }
    \subfigure[]{
    \includegraphics[scale=0.45]{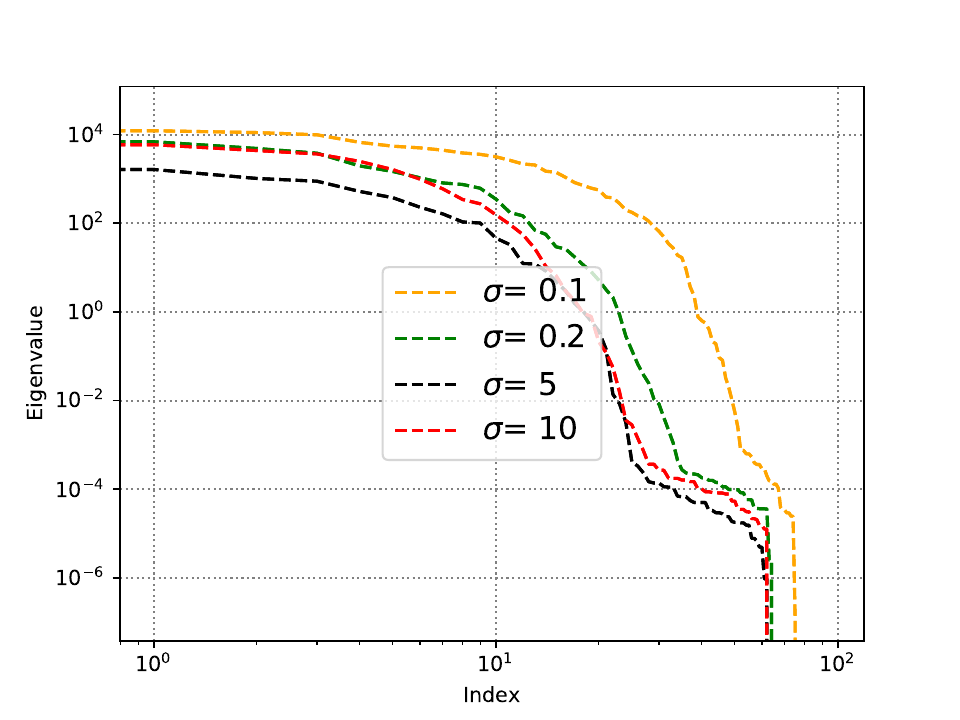}
    \label{fig:b3D}
    }
\caption{a) Approximation of function \eqref{eq:solpoissoneq} using a Wav-KAN with two layers [1,35,1] and the Derivative of Gaussian Wavelet as the mother wavelet for different $\sigma$ values: $0.1, 0.2, 5$ y $10$, trained for 1000 epochs. b) NTK eigenvalues in descending order for each value of $\sigma$.}
    \label{fig:3D}
\end{figure}

%\Address

\end{document}